\newtheorem{theorem}{Theorem}
\newtheorem{lemma}{Lemma}
\DeclareMathOperator*{\argmaxA}{arg\,max}
\newcommand{\norm}[1]{\left\lVert#1\right\rVert}
\begin{document}
%-------------------------------------> Title
\title{Online Learning with Costly Features in Non-stationary Environments
% \thanks{This research was supported by Grant 01IS20051 from the German Federal Ministry of Education and Research (BMBF).}
}
%
%\titlerunning{Abbreviated paper title}
% If the paper title is too long for the running head, you can set
% an abbreviated paper title here
%
%-------------------------------------> Author
\author{
\IEEEauthorblockN{Saeed Ghoorchian, Evgenii Kortukov, and Setareh Maghsudi}
% First Author\inst{1}\orcidID{0000-1111-2222-3333} \and
% Second Author\inst{2,3}\orcidID{1111-2222-3333-4444} \and
% Third Author\inst{3}\orcidID{2222--3333-4444-5555}
\thanks{The authors are with the Faculty of Mathematics and Natural Sciences, T{\"u}bingen University, 72074 T{\"u}bingen, Germany. S. M. is also with the Fraunhofer Heinrich Herz Institute, Berlin, Germany. 
E-mail: saeed.ghoorchian@uni-tuebingen.de, evgenii.kortukov@student.uni-tuebingen.de, setareh.maghsudi@uni-tuebingen.de
}
}
%
% \authorrunning{S. Ghoorchian et al.}
% First names are abbreviated in the running head.
% If there are more than two authors, 'et al.' is used.
%
% \institute{
% Eberhard Karls University of T{\"u}bingen, T{\"u}bingen, Germany
% \email{saeed.ghoorchian@uni-tuebingen.de} \\
% \email{evgenii.kortukov@student.uni-tuebingen.de} \\
% \email{setareh.maghsudi@uni-tuebingen.de}
% Princeton University, Princeton NJ 08544, USA \and
% Springer Heidelberg, Tiergartenstr. 17, 69121 Heidelberg, Germany
% \email{lncs@springer.com}\\
% \url{http://www.springer.com/gp/computer-science/lncs} \and
% ABC Institute, Rupert-Karls-University Heidelberg, Heidelberg, Germany\\
% \email{\{abc,lncs\}@uni-heidelberg.de}
% }
%
\maketitle % typeset the header of the contribution
%
%-------------------------------------> Abstract
\begin{abstract}
Maximizing long-term rewards is the primary goal in sequential decision-making problems. The majority of existing methods assume that side information is freely available, enabling the learning agent to observe all features' states before making a decision. In real-world problems, however, collecting beneficial information is often costly. That implies that, besides individual arms' reward, learning the observations of the features' states is essential to improve the decision-making strategy. The problem is aggravated in a non-stationary environment where reward and cost distributions undergo abrupt changes over time. To address the aforementioned dual learning problem, we extend the contextual bandit setting and allow the agent to observe subsets of features' states. The objective is to maximize the long-term average gain, which is the difference between the accumulated rewards and the paid costs on average. Therefore, the agent faces a trade-off between minimizing the cost of information acquisition and possibly improving the decision-making process using the obtained information. To this end, we develop an algorithm that guarantees a sublinear regret in time. Numerical results demonstrate the superiority of our proposed policy in a real-world scenario.
% \keywords{Contextual multi-armed bandit  \and non-stationary process \and online learning \and costly information acquisition.}
\end{abstract}
%-------------------------------------> Keywords
% \begin{IEEEkeywords}
{\em Keywords:} %{\small \textbf{Index Terms --}} %\em 
Contextual multi-armed bandit, non-stationary process, online learning, costly information acquisition.
% \end{IEEEkeywords}
%-------------------------------------> Section Introduction
\section{Introduction}
\label{sec:Intro}
In a sequential decision-making problem, an agent takes action over consecutive rounds of play to optimize a long-term metric. Over the past decades, a large body of literature develop decision-making policies that deal with such optimization problems under various constraints \cite{Lattimore20:BAL, Hoi21:OLA}. In most cases, particularly in the era of big data, the proposed methods postulate the possibility of information acquisition with no limit and for free. In reality, however, access to side information is challenging; collecting information might be costly. For example, in online advertising problems, the advertiser can purchase information about target users to display personalized ads. As another example, in medical contexts, obtaining information for treatment recommendations mainly requires additional tests that are time- and money-consuming. Thus, it is essential to develop algorithms that can learn the optimal observations and actions simultaneously.

Real-world problems frequently appear in non-stationary environments. For instance, in the application of personalized news recommendation, user preferences over news can change over time and exhibit various seasonality patterns \cite{Wu19:DEC}. As another example, in the wireless network routing problem, the quality and availability of each link may change over time due to network congestion or maintenance \cite{Liu12:ASP}. The dual learning problem described above becomes significantly more challenging when the environment changes. In fact, in a non-stationary environment, the value of obtained information, such as received action's feedback or paid observation's cost, before a change in the environment might become obsolete after the change occurs. Therefore, the agent has to constantly adapt her strategy and improve the decision-making process to comply faster with the changes in the environment, while she simultaneously performs the aforementioned dual learning task.

We address the mentioned challenges by using the Multi-Armed Bandit (MAB) \cite{Robbins52:SAS} framework, where a learning agent selects an arm at sequential decision-making rounds and the environment reveals a feedback drawn from some unknown probability distribution. In this setting, the agent experiences the exploration-exploitation dilemma, where the decision has to be made between exploring options to acquire new knowledge and selecting an option by exploiting the existing knowledge \cite{Maghsudi16:MAB}. In a contextual MAB problem, the agent has additional access to some side information and is able to observe this contextual information before making decision at each round. However, in practice, such contextual information is not always readily available to the agent, but rather it has to be acquired in exchange for a cost. 

In this paper, we model the described problem using the contextual bandit setting and introduce the non-stationary costly contextual bandit problem, which we call it NCC problem for short. We propose and analyze an algorithm to solve the NCC problem. Our proposed algorithm can be considered as a variant of the UCRL2 algorithm \cite{Jaksch10:NOR}. Moreover, it uses a sliding window to estimate the non-stationary rewards and costs. We prove that our algorithm achieves a sublinear regret bound in time. We validate our solution on a real-world problem of ranking nursery school applications. The results demonstrate the superiority of our algorithm compared to several benchmarks.
%-------------------------------------> Subsection Related Works
\subsection{Related Works}
\label{subsec:RelatedWorks}
%
% The exploration-exploitation dilemma is ubiquitous in real-world problems. Potential application domains span across different fields, including online recommender systems \cite{Xu20:CBB}, edge computing problems \cite{Ghoorchian21:MAB}, design of clinical trials \cite{Villar15:MBM}, or targeted Covid-19 border testing of travelers \cite{Bastani21:EAT}. In MAB problems, such dilemma is well-addressed in stationary environments \cite{Auer02:FTA}, \cite{Li10:ACB}, \cite{Agrawal13:TSF}, \cite{Langford07:TEG}, \cite{Agarwal14:TTM}. Different strategies, such as those based on Upper Confidence Bounds (UCBs) \cite{Li10:ACB} and Thompson sampling \cite{Thompson33:OTL} as well as greedy approaches \cite{Langford07:TEG}, are proposed in the literature. Despite achieving promising results in stationary environments, they fail in non-stationary environments.

% This non-stationary setting is considered in \cite{Hariri15:ATU} to develop a recommendation system with abruptly changing user preferences.
 % Several works examine the application of linear non-stationary contextual MAB problems to a recommendation system scenario \cite{Zeng16:OCA, Wu18:LCB, Xu20:CBB}.
Non-stationary multi-armed bandits have attracted intensive attention in the past years, both from the theory \cite{Garivier11:OUC, Luo18:ECB, Chen19:ANA, Russac19:WLB, Cheung19:LTO} and the application \cite{Hariri15:ATU, Zeng16:OCA, Wu18:LCB, Xu20:CBB} side. Potential application domains span across different fields, including online recommender systems \cite{Zeng16:OCA, Wu18:LCB, Wu19:DEC, Xu20:CBB}, edge computing problems \cite{Ghoorchian21:MAB}, hyperparameter optimization \cite{Lu22:NSC}, virtual reality for rehabilitation \cite{Kamikokuryo22:AAA}, split liver transplantation allocation \cite{Tang21:MAB}, evaluation of information retrieval systems \cite{Losada17:MAB}, or targeted Covid-19 border testing of travelers \cite{Bastani21:EAT}. The state-of-the-art methods in non-stationary bandits either do not consider access to contextual information or do not assume costly information acquisition. In the seminal work of \cite{Garivier11:OUC}, the authors use a sliding window or a discount factor to estimate the rewards with piece-wise stationary generating processes. \cite{Cheung19:LTO} studies the linear stochastic bandit in a drifting environment with a variation budget. The authors propose an Upper Confidence Bound (UCB)-based algorithm that adapts to reward changes using a sliding window and a Bandit-over-Bandit framework for tuning the proposed algorithm's parameter adaptively. The authors in \cite{Russac19:WLB} study linear stochastic bandits in abruptly changing and slowly varying environments. They utilize exponentially increasing weights of observations to reduce the influence of past observations with time, thereby adapting to environmental changes. In \cite{Xu20:CBB}, the authors consider a contextual bandit problem and use two sliding windows to detect changes in reward distributions. If the rewards inside the second window are not predictable with high accuracy from observations inside the first window, the proposed algorithm considers a new change point. The observations since the last change point are used to select arms. Besides, \cite{Zeng16:OCA} uses Gaussian random walks to model the non-stationarity in underlying reward-generating processes. Online inference based on particle learning is applied to fit the bandit parameters sequentially. Moreover, \cite{Wu18:LCB} proposes a hierarchical bandit algorithm, which maintains a suite of bandit models that estimate the reward distributions using a subset of observations. A higher level bandit model measures if the prediction error of lower level models exceeds some threshold, discards them accordingly, and creates new ones. Further, \cite{Luo18:ECB, Chen19:ANA} study the general non-stationary contextual MAB problem and propose algorithms that achieve sublinear regret bounds without the knowledge of the number of change points. However, both of these works do not consider costly information acquisition. Our paper, in contrast, focuses on non-stationary contextual bandits with general (linear or nonlinear) reward and cost functions. Our proposed algorithm achieves sublinear regret by adapting to reward and cost distribution drifts, conditioned on tuning the sliding window size.

% \textcolor{green}{In \cite{Luo18:ECB, Chen19:ANA}, the authors analyze the general non-stationary contextual MAB problem. \cite{Luo18:ECB} proposes algorithms with sublinear regret when the number of change points $S$ or the reward variation $\Delta$ is known. 
% The reward variation is defined as $\Delta = \sum_{t=2}^{T}{\max_{\pi \in \Pi}|\mathcal{R}_{t}(\pi) - \mathcal{R}_{t-1}(\pi)|}$.
% It also introduces an algorithm that works without the knowledge of $S$ or $\Delta$ that achieves sublinear regret. \cite{Chen19:ANA} further improves the developed regret bound in \cite{Luo18:ECB} for the case when $S$ or $\Delta$ are unknown in advance. 
% However, both of these works do not consider costly information acquisition. Our paper, in contrast, focuses on the connection between non-stationarity and costly features. We propose an algorithm that achieves sublinear regret by adapting to changing reward and feature cost distributions, conditioned on tuning the sliding window size.
% Investigating whether a sublinear regret bound is achievable without tuning the parameters in the costly features setting is a possible direction for future work.
% }

Costly features in online learning problems have been addressed both in the full information setting \cite{Zolghadr13:OLW, Janisch20:CWC, Shim18:JAF}, and in the bandit setting \cite{Atan21:DDO}. However, the existing methods with bandit feedback either do not model the cost as a random variable or do not take into account the non-stationarity of the environment. Reference \cite{Atan21:DDO} is the most relevant work to ours. The authors consider a contextual bandit problem where observing features' states is costly. However, the costs are constant values, and the reward-generating processes are stationary. Our approach shall not be mistaken for MAB problems with paid observations \cite{Seldin14:PLA}, where the agent can observe the rewards of any subset of arms after paying the costs at each round. In contrast, in our work, we allow for feature vectors and assume that observing feature's states is costly.

Another related area of research is budget-constrained learning, where feature selection is adaptive. For example, the authors in \cite{Cesa-Bianchi11:ELP} consider linear regression models under local and global constraints on the number of observed features. They propose an algorithm that actively chooses the features to observe for each data sample. As another example, the authors in \cite{Hazan12:LRL} consider linear regression with a budget on the number of feature observations for each data sample. They analyze the number of required samples for the model with partial information to attain the same error as that with complete information. Unlike our approach, these works consider a batch learning setting with the free observation of a limited number of features. Besides, in \cite{Janisch20:CWC}, the authors investigate an online classification problem with a per-sample budget for observing features, where features have various costs. They propose a deep reinforcement learning algorithm to solve the problem. \cite{Bouneffouf17:CAB} studies a contextual bandit problem in which the agent has a fixed budget on the number of features she can observe before choosing an arm. The authors take advantage of Thompson sampling and propose an algorithm that works in stationary and non-stationary environments. However, they do not provide regret analysis for the proposed method. Compared to the aforementioned works, we do not assume a budget constraint; nonetheless, the agent attempts to minimize the total cost of observing features' states. Therefore, in our proposed method, the agent adaptively selects the features and learns the optimal policy from limited information.

%-------------------------------------> Organization
% \subsection{Organization}
% \label{subsec:organization}
%
The rest of the paper is as follows. We formulate the NCC bandit problem in Section \ref{sec:ProFor}. We describe our proposed method, NCC-UCRL2, in Section \ref{sec:strategy}. In Section \ref{sec:regretanalysis}, we analyze the performance of NCC-UCRL2 theoretically. Section \ref{sec:NumAnalysis} includes numerical evaluation, and Section \ref{sec:Conclusion} concludes the paper.
%-------------------------------------> Section Pro For
\section{Problem Formulation}
\label{sec:ProFor}
Let $\mathcal{A} = \{1, 2, \ldots, A\}$ denote the set of \textit{actions}. $\mathcal{D} = \{1, 2, \dots, D\}$ represents a finite set of \textit{features}. Each feature $i \in \mathcal{D}$ has some random state $\boldsymbol{\Phi}[i] \in \mathcal{X}_{i}$, where $\mathcal{X}_{i}$ denotes a finite set of states for feature $i$. We collect the random features' states of all the features in the random state vector $\boldsymbol{\Phi} = \left[\boldsymbol{\Phi}[1], \boldsymbol{\Phi}[2], \ldots, \boldsymbol{\Phi}[D] \right] \in \mathcal{X} = \bigotimes_{i \in \mathcal{D}} \mathcal{X}_{i}$. Let $\boldsymbol{\phi}$ be a realization of the random state vector, which is drawn from a fixed but unknown distribution. $\mathbb{P}[\boldsymbol{\Phi} = \boldsymbol{\phi}]$ shows the probability of state vector $\boldsymbol{\phi}$ being realized. 

At each time $t$, the environment draws a \textit{state vector} $\boldsymbol{\phi}_{t} = [\boldsymbol{\phi}_{t}[1], \boldsymbol{\phi}_{t}[2], \dots,  \newline \boldsymbol{\phi}_{t}[D] ]$. The agent can select a subset of features $\mathcal{I}_{t} \subseteq \mathcal{D}$, called the \textit{observation set}, for costly observation. Other elements of the state vector remain unknown.  When $|\mathcal{I}_{t}| = 0$, i.e., $\mathcal{I}_{t} = \emptyset$, none of features' states are observed at time $t$. We use $\mathcal{P}(\mathcal{D})$ to represent the power set of $\mathcal{D}$ that includes all possible observation sets, i.e., $\mathcal{P}(\mathcal{D}) = \{\mathcal{I} \subseteq \mathcal{D}~|~0 \leq |\mathcal{I}| \leq D\}$. Besides, the \textit{partial state vector} $\boldsymbol{\psi}_{t} = \left[\boldsymbol{\psi}_{t}[1], \boldsymbol{\psi}_{t}[2], \ldots, \boldsymbol{\psi}_{t}[D]\right]$ can be represented as
\begin{align}
\label{eq:ParStateDef}
\boldsymbol{\psi}_{t}[i] = \begin{cases}
\boldsymbol{\phi}_{t}[i], & \quad \text{if}~~i \in \mathcal{I}_{t}, \\
\textup{N/A}, & \quad \text{if}~~i \notin \mathcal{I}_{t},
\end{cases}
\end{align}
where $\textup{N/A}$ indicates the corresponding feature's state is missing. Let $\mathscr{D}(\boldsymbol{\psi}) = \{i \in \mathcal{D}~|~\boldsymbol{\psi}[i] \neq \; \textup{N/A}\}$ represent the \textit{domain set} of a partial state vector $\boldsymbol{\psi}$. By $\boldsymbol{\Psi}^{+}(\mathcal{I}) = \{\boldsymbol{\psi}~|~\mathscr{D}(\boldsymbol{\psi}) = \mathcal{I}\}$, we denote the set of all possible partial state vectors whose domain set is equal to the observation set $\mathcal{I}$. Therefore, $\boldsymbol{\Psi} = \bigcup_{\mathcal{I} \subseteq \mathcal{D}} \boldsymbol{\Psi}^{+}(\mathcal{I})$ denotes the set of all possible partial state vectors. Furthermore, we define a partial state vector $\boldsymbol{\psi}$ to be \textit{consistent} with $\boldsymbol{\phi}$ if $\boldsymbol{\psi}[i] = \boldsymbol{\phi}[i]$, $\forall i \in \mathscr{D}(\boldsymbol{\psi})$. We use $\boldsymbol{\phi} \sim \boldsymbol{\psi}$ to show that $\boldsymbol{\psi}$ is consistent with $\boldsymbol{\phi}$. Moreover, $\boldsymbol{\psi}$ is a \textit{substate} of $\boldsymbol{\psi}'$ if both the partial state vectors $\boldsymbol{\psi}$ and $\boldsymbol{\psi}'$ are consistent with $\boldsymbol{\phi}$ and $\mathscr{D}(\boldsymbol{\psi}) \subseteq \mathscr{D}(\boldsymbol{\psi}')$. We use $\boldsymbol{\psi} \preceq \boldsymbol{\psi}'$ to show that $\boldsymbol{\psi}$ is a substate of $\boldsymbol{\psi}'$. For every $i \in \mathcal{I}_{t}$, $\boldsymbol{c}_{t}[i] \in [0,1]$ shows the random cost to observe $\boldsymbol{\phi}_{t}[i]$, which follows an unknown probability distribution with mean $\bar{\boldsymbol{c}}_{t}[i]$. Also, by $\boldsymbol{c}_{t} = [\boldsymbol{c}_{t}[1], \boldsymbol{c}_{t}[2], \dots, \boldsymbol{c}_{t}[D]]$ and $\bar{\boldsymbol{c}}_{t} = \bar{\boldsymbol{c}}_{t}[1], \bar{\boldsymbol{c}}_{t}[2], \dots, \bar{\boldsymbol{c}}_{t}[D]]$, we denote the \textit{cost vector} and the \textit{mean cost vector} of all features at time $t$, respectively. 

At each time $t$, the agent follows a policy $\pi_{t}$ to select an observation set $\mathcal{I}_{t}$ and an action $a_{t}$. Therefore, we define the \textit{policy} at time $t$ using a tuple $\pi_{t} = (\mathcal{I}_{t}, h_{t})$, where $h_{t}: \boldsymbol{\Psi}^{+}(\mathcal{I}_{t}) \rightarrow \mathcal{A}$ denotes an adaptive action selection strategy that maps a partial state vector $\boldsymbol{\psi}_{t} \in \boldsymbol{\Psi}^{+}(\mathcal{I}_{t})$ to an action $a_{t} \in \mathcal{A}$. The agent then receives a random reward $r_{t} \in [0,1]$ whose distribution is unknown a priori. We define the unknown expected reward function as $\bar{r}_{t}:\mathcal{A} \times \mathcal{X} \rightarrow \left[0,1\right]$; hence $\bar{r}_{t}(a_{t},\boldsymbol{\phi}_{t})$ is the expected reward of action $a_{t}$ at time $t$ when the state vector is $\boldsymbol{\phi}_{t}$. The generating processes of rewards and costs are piece-wise stationary so that there exist $\Upsilon_{T}$ time instants before a time horizon $T$ where at least one of the mean rewards or mean costs changes abruptly. We define the marginal probabilities and expected rewards of partial state vectors using the definition of probability distribution and expected reward for the state vectors. The marginal probability of the partial state vector $\boldsymbol{\psi}_{t}$ being realized at time $t$ is defined as $p(\boldsymbol{\psi}_{t}) = \mathbb{P}[\boldsymbol{\Phi}_{t} \sim \boldsymbol{\psi}_{t}]$. Moreover, $\bar{r}_{t}(a_{t},\boldsymbol{\psi}_{t}) = \mathbb{E}\left[ \bar{r}_{t}(a_{t},\boldsymbol{\Phi}_{t})~|~\boldsymbol{\Phi}_{t} \sim \boldsymbol{\psi}_{t} \right]$ indicates the marginal expected reward of action $a_{t}$ when the partial state vector $\boldsymbol{\psi}_{t}$ is observed. Therefore, for a fixed observation set $\mathcal{I}$, it holds that $\sum_{\boldsymbol{\psi} \in \boldsymbol{\Psi}^{+}(\mathcal{I})} p(\boldsymbol{\psi}) = 1$.

The \textit{expected gain} of the agent following the policy $\pi=(\mathcal{I},h)$ at time $t$ yields
\begin{align}
\label{eq:ExpGain}
    \rho_{t}^{\pi} = \sum_{\boldsymbol{\psi} \in \boldsymbol{\Psi}^{+}(\mathcal{I})} p(\boldsymbol{\psi}) \bar{r}_{t}(h(\boldsymbol{\psi}), \boldsymbol{\psi}) - \sum_{i \in \mathcal{I}} \bar{\boldsymbol{c}}_{t}[i].
\end{align}
In words, the expected gain of the agent that follows a policy $\pi$ at time $t$ is the expected reward of $\pi$ received by the agent at time $t$ minus the expected cost of $\pi$ incurred by the agent due to state observation at time $t$. Let $\Pi$ denote the set of all feasible policies defined as
\begin{align}
    \Pi = \{ (\mathcal{I}, h) | \mathcal{I} \in \mathcal{P}(\mathcal{D}) \}.
\end{align}
Therefore, the optimal policy $\pi_{t}^{\ast} = (\mathcal{I}_{t}^{\ast}, h_{t}^{\ast})$ at time $t$ is given by
\begin{align}
\label{eq:optimalpolicy}
    \pi_{t}^{\ast} = \argmaxA_{\pi \in \Pi}~~\rho_{t}^{\pi}.
\end{align}
Moreover, the expected gain of the optimal policy at time $t$ is denoted by $\rho_{t}^{\ast} = \rho_{t}^{\pi_{t}^{\ast}}$. We summarize the most important notations in \textbf{Table \ref{table:Notations}}.
%-------------------------------------> Table
\renewcommand{\arraystretch}{1.1}
\renewcommand{\tabcolsep}{1.1mm}
\begin{table}[t]
\caption{Summary of Notations.}
\label{table:Notations}
% {\footnotesize
% \centering
\begin{center}
\scalebox{0.85}{
% \resizebox{\columnwidth}{!}{%
\begin{tabular}{c|l}
    % \cline{1-1}
    \hline
    Notation & \multicolumn{1}{c}{Definition}  \\ 
    \hline
    $\mathcal{A}$ & Set of actions \\
    \hline
    $\mathcal{D}$ & Set of features \\
    \hline
    $\boldsymbol{\phi}_{t}$ & Unknown state vector at time $t$ \\
    \hline
    $\mathcal{I}_{t}$ & Observation set of selected features at time $t$ \\
    \hline
    $\boldsymbol{\psi}_{t}$ & Partial state vector observed by the agent at time $t$ \\
    \hline
    $a_{t}$ & Action of the agent at time $t$ \\
    \hline
    $r_{t}$ & Reward at time $t$ \\
    \hline
    % \bar{\boldsymbol{c}}_{t}[i]
    $\boldsymbol{c}_{t}[i]$ & Cost of state observation for feature $i \in \mathcal{D}$ at time $t$ \\
    \hline
    $\rho_{t}^{\pi}$ & Expected gain of policy $\pi$ \\
    \hline
    $\mathscr{D}(\boldsymbol{\psi})$ & Domain set of partial state vector $\boldsymbol{\psi}$ \\
    \hline
    $\boldsymbol{\Psi}^{+}(\mathcal{I})$ & Set of all partial state vectors with domain $\mathcal{I}$ \\
    \hline
    $\boldsymbol{\Psi}$ & Set of all partial state vectors \\
    \hline
    \end{tabular}
}
\end{center}
% }
\end{table}
%--------------------------------

% defined in 
The optimal policy (\ref{eq:optimalpolicy}) for NCC problem differs from the conventional optimal policies in the contextual bandit problems. Let $a_{t}^{\ast}(\boldsymbol{\psi}) = \argmaxA_{a \in \mathcal{A}}~\bar{r}_{t}(a, \boldsymbol{\psi})$ denote the best action for a given partial state vector $\boldsymbol{\psi}$. Moreover, define $\bar{r}_{t}^{\ast}(\boldsymbol{\psi}) = \bar{r}_{t}(a_{t}^{\ast}(\boldsymbol{\psi}), \boldsymbol{\psi})$ as the expected reward of the best action when the partial state vector is $\boldsymbol{\psi}$. Moreover, for a fixed observation set $\mathcal{I}$, define a policy $\pi_{t}(\mathcal{I}) = (\mathcal{I}, a_{t}^{*}(\boldsymbol{\psi}))$ that selects the observation set $\mathcal{I}$ and the best action $a_{t}^{*}(\boldsymbol{\psi})$ for any $\boldsymbol{\psi} \in \boldsymbol{\Psi}^{+}(\mathcal{I})$ at time $t$. The expected gain of the policy $\pi_{t}(\mathcal{I})$ can be calculated as $V_{t}(\mathcal{I}) = \sum_{\boldsymbol{\psi} \in \boldsymbol{\Psi}^{+}(\mathcal{I})} p(\boldsymbol{\psi}) \bar{r}_{t}^{*}(\boldsymbol{\psi}) - \sum_{i \in \mathcal{I}} \bar{\boldsymbol{c}}_{t}[i]$. Then, the optimal policy $\pi_{t}^{\ast} = (\mathcal{I}_{t}^{\ast}, h_{t}^{\ast})$ defined in (\ref{eq:optimalpolicy}) can be obtained by
\begin{equation}
\begin{aligned}
\label{eq:oracle}
    \mathcal{I}_{t}^{\ast} &= \argmaxA_{\mathcal{I} \in \mathcal{P}(\mathcal{D})}~V_{t}(\mathcal{I}), \\
    h_{t}^{\ast}(\boldsymbol{\psi}) &= \argmaxA_{a \in \mathcal{A}}~\bar{r}_{t}(a, \boldsymbol{\psi}).
\end{aligned}
\end{equation}
We observe that $\rho_{t}^{\ast} = V_{t}(\mathcal{I}_{t}^{\ast})$, which means the optimal policy (\ref{eq:optimalpolicy}) achieves the highest expected gain at each time $t$ among all the policies $\pi_{t}(\mathcal{I})$. 

Ideally, the agent aims at maximizing the total expected gain over the time horizon $T$. Alternatively, the agent's goal is to minimize the \textit{expected regret} over the time horizon $T$, defined as the difference between the accumulated expected gain of the oracle that follows the optimal policy and that of the agent that follows the applied policy. Formally, the expected regret is defined as
\begin{align}
\label{eq:ExpRegret}
    \mathcal{R}_{T}(\Pi) = \sum_{t = 1}^{T} \left[\rho_{t}^{\ast} - \rho_{t}^{\pi_{t}}\right].
\end{align}
In the next section, we propose a policy to minimize the expected regret (\ref{eq:ExpRegret}).
%-------------------------------------> Section Warm-Up
\section{Decision-Making Strategy}
\label{sec:strategy}
In this section, we propose our decision-making strategy to solve the NCC problem described in Section \ref{sec:ProFor}. Our policy, presented in \textbf{Algorithm \ref{alg:NCC-UCRL2}}, takes three types of confidence regions into account, for rewards, costs, and probabilities of partial state vectors. Since the random generating processes of rewards and costs are non-stationary, we use a sliding window of size $w > 0$ to estimate their mean values. At each time $t$, we define 
\begin{equation}
\label{eq:CounterSet-a-psi}
\mathcal{T}_{t}(a, \boldsymbol{\psi} ; w) = \{ t - w < \tau < t ~|~ a_{\tau} = a~\&~\boldsymbol{\psi}_{\tau} = \boldsymbol{\psi} \},
\end{equation}
%
% and
%
\begin{align}
\label{eq:CounterSet-i}
    \mathcal{T}_{t}(i ; w) = \{ t - w < \tau < t ~|~ i \in \mathcal{I}_{\tau} \}.
\end{align}
For each $a \in \mathcal{A}$ and $\boldsymbol{\psi} \in \boldsymbol{\Psi}$, we calculate the empirical average of rewards at time $t$ by
\begin{equation}
\label{eq:AveRew}
\hat{r}_{t}(a, \boldsymbol{\psi}) = \frac{1}{N_{t}(a, \boldsymbol{\psi}; w)} \sum_{\tau \in \mathcal{T}_{t}(a, \boldsymbol{\psi}; w)}^{} r_{\tau},
\end{equation}
where $N_{t}(a, \boldsymbol{\psi}; w) = \max\{1, |\mathcal{T}_{t}(a, \boldsymbol{\psi}; w)|\}$. Moreover, at each time $t$, we calculate the empirical average of costs for each $i \in \mathcal{D}$ by
\begin{equation}
\label{eq:AveCost}
\hat{\boldsymbol{c}}_{t}[i] = \frac{1}{N_{t}(i; w)} \sum_{\tau \in \mathcal{T}_{t}(i; w)}^{} \boldsymbol{c}_{\tau}[i],
\end{equation}
where $N_{t}(i; w) = \max\{1, |\mathcal{T}_{t}(i;w)|\}$. 

Our policy uses the collected data to estimate the probabilities of partial state vectors; that is, after observing the partial state vector $\boldsymbol{\psi}_{t}$, the agent uses it to update the estimate of the probability of $\boldsymbol{\psi}_{t}$ and the probabilities of all the substates of $\boldsymbol{\psi}_{t}$. However, the agent cannot use the obtained reward at time $t$ to update the estimate of mean reward for action $a_{t}$ and the sub-states of $\boldsymbol{\psi}_{t}$, since it introduces a bias into the mean reward estimation. Therefore, we define
\begin{equation}
\label{eq:CounterSet-I}
\mathcal{T}_{t}(\mathcal{I}) = \{ \tau < t ~|~ \mathcal{I} \subseteq \mathcal{I}_{\tau} \},
\end{equation}
%
% and
%
\begin{align}
\label{eq:CounterSet-I-psi}
\mathcal{T}_{t}(\mathcal{I}, \boldsymbol{\psi}) = 
\begin{cases}
    \{ \tau < t \hspace{0.4mm}|\hspace{0.4mm} \mathcal{I} \subseteq \mathcal{I}_{\tau} \hspace{0.4mm}\&\hspace{0.4mm} \boldsymbol{\psi} \preceq \boldsymbol{\psi}_{\tau} \}, & \boldsymbol{\psi} \in \boldsymbol{\Psi}^{+}(\mathcal{I}), \\
    \emptyset, & \boldsymbol{\psi} \notin \boldsymbol{\Psi}^{+}(\mathcal{I}).
\end{cases}
\end{align}
Then, we estimate the probability for each partial state vector $\boldsymbol{\psi} \in \boldsymbol{\Psi}$ at time $t$ as
\begin{equation}
\label{eq:EstimatedParStatePro}
\hat{p}_{t}(\boldsymbol{\psi}) = \frac{N_{t}(\mathscr{D}(\boldsymbol{\psi}), \boldsymbol{\psi})}{N_{t}(\mathscr{D}(\boldsymbol{\psi}))},
\end{equation}
where $N_{t}(\mathcal{I}, \boldsymbol{\psi}) = \max\{1, |\mathcal{T}_{t}(\mathcal{I}, \boldsymbol{\psi})|\}$ and $N_{t}(\mathcal{I}) = \max\{1, |\mathcal{T}_{t}(\mathcal{I})|\}$.

%-------------------------------------> Algorithm
% \scalebox{0.9}{
\begin{algorithm}[t!]
\setstretch{1.1}
\caption{NCC-UCRL2}
\label{alg:NCC-UCRL2} 
% Number of arms $A$, w
\textbf{Input:} Window size $w$. %\\
% \vspace{-5mm}
\begin{algorithmic}[1]
\STATE \textbf{Initialize:} $\forall a \in \mathcal{A}$, $\forall \boldsymbol{\psi} \in \boldsymbol{\Psi}$, $\forall i \in \mathcal{D}$, $\forall \mathcal{I} \in \mathcal{P}(\mathcal{D})$:\\
$\mathcal{T}_{1}(a, \boldsymbol{\psi} ; w) = \emptyset$,~$\mathcal{T}_{1}(i ; w) = \emptyset$,~$\mathcal{T}_{1}(\mathcal{I}) = \emptyset$,~$\mathcal{T}_{1}(\mathcal{I}, \boldsymbol{\psi}) = \emptyset$.\\
% $N_{0}(a, \boldsymbol{\psi}; w) = 0$,~$N_{0}(\mathcal{I}) = 0$,~$N_{0}(i; w) = 0$,~$N_{0}(\mathscr{D}(\boldsymbol{\psi}), \boldsymbol{\psi}) = 0$.
%
\FOR{$t = 1, \ldots, T$}
  %
%   \STATE Calculate confidence bounds $C_{t}(a, \boldsymbol{\psi}; w)$, $\forall a \in \mathcal{A}$, $\forall \boldsymbol{\psi} \in \Psi$, $C_{t}(i; w)$, $\forall i \in \mathcal{D}$, $C_{t}(\mathcal{I})$, $\forall \mathcal{I} \in \boldsymbol{\Psi}$.
  %
  \STATE Compute $\hat{r}_{t}(a,\boldsymbol{\psi})$,~$\forall a \in \mathcal{A}$, $\forall \boldsymbol{\psi} \in \boldsymbol{\Psi}$, using (\ref{eq:AveRew}).
  \STATE Compute $\hat{\boldsymbol{c}}_{t}[i]$,~$\forall i \in \mathcal{D}$, using (\ref{eq:AveCost}).
  \STATE Compute $\hat{p}_{t}(\boldsymbol{\psi})$,~$\forall \boldsymbol{\psi} \in \boldsymbol{\Psi}$. using (\ref{eq:EstimatedParStatePro}). 
  \STATE Solve Problem (\ref{eq:optimization}), $\forall \mathcal{I} \in \mathcal{P}(\mathcal{D})$, and obtain $\hat{V}_{t}(\mathcal{I})$.
  \STATE Select the observation set $\hat{\mathcal{I}}_{t}$ that solves (\ref{eq:SelectedObservationSet}) and pay the cost $\sum_{i \in \hat{\mathcal{I}}_{t}}^{} \boldsymbol{c}_{t}[i]$.
  %
  % \vspace{-4.5mm}
  \STATE Determine the action selection strategy $\hat{h}_{t}(\boldsymbol{\psi})$ based on (\ref{eq:DeterminedActionStrategy}).
  \STATE Observe the partial state vector $\boldsymbol{\psi}_{t} \in \boldsymbol{\Psi}^{+}(\hat{\mathcal{I}}_{t})$.
  \STATE Select the action $a_{t} = \hat{h}_{t}(\boldsymbol{\psi}_{t})$ and observe the reward $r_{t}$.
  %
%   \STATE Update $N_{t}(\mathscr{D}(\boldsymbol{\psi}))$ and  $N_{t}(\mathscr{D}(\boldsymbol{\psi}), \boldsymbol{\psi})$, $\forall \boldsymbol{\psi}$~s.t.~$\boldsymbol{\psi} \preceq \boldsymbol{\psi}_{t}$.
%   %
%   \STATE Update $N_{t}(a_{t}, \boldsymbol{\psi}_{t}; w)$.
%   %
%   \STATE Update $N_{t}(i; w)$, $\forall i \in \hat{\mathcal{I}}_{t}$.
%   %
    \STATE Update $\mathcal{T}_{t}(\mathscr{D}(\boldsymbol{\psi}))$ and $\mathcal{T}_{t}(\mathscr{D}(\boldsymbol{\psi}), \boldsymbol{\psi})$, $\forall \boldsymbol{\psi}$~s.t.~$\boldsymbol{\psi} \preceq \boldsymbol{\psi}_{t}$.
    %
    % \vspace{-4.5mm}
    \STATE Update $\mathcal{T}_{t}(a_{t}, \boldsymbol{\psi}_{t}; w)$.
    \STATE Update $\mathcal{T}_{t}(i; w)$, $\forall i \in \hat{\mathcal{I}}_{t}$.
\ENDFOR 
\end{algorithmic} 
\end{algorithm}
% }
%-------------------------------------

When searching for the optimal observation set and action, we add high-probability confidence bounds to the aforementioned estimates. Let $\boldsymbol{\Psi}_{tot} = \sum_{\mathcal{I} \in \mathcal{P}(\mathcal{D})} |\boldsymbol{\Psi}^{+}(\mathcal{I})|$ and $\delta > 0$. For each action $a \in \mathcal{A}$ and partial state vector $\boldsymbol{\psi} \in \boldsymbol{\Psi}$, we define
\begin{align}
    \tilde{r}_{t}(a, \boldsymbol{\psi}) = \hat{r}_{t}(a, \boldsymbol{\psi}) + C_{t}(a, \boldsymbol{\psi}; w),
\end{align}
where $C_{t}(a, \boldsymbol{\psi}; w) = \min\left\{ 1, \sqrt{\frac{\log{(T A \boldsymbol{\Psi}_{tot} w/\delta)}}{N_{t}(a, \boldsymbol{\psi}; w)}} \right\}$. Moreover, for each feature $i \in \mathcal{D}$, we define
\begin{align}
\label{eq:pessimisticcosts}
    \tilde{\boldsymbol{c}}_{t}[i] = \hat{\boldsymbol{c}}_{t}[i] - C_{t}(i; w),
\end{align}
where $ C_{t}(i; w) = \min\left\{ 1, \sqrt{\frac{2 \log{(T D w/\delta)}}{N_{t}(i; w)}} \right\}$. The optimistic gain at time $t$ can be found by searching for partial state vector probabilities over a high-probability space and a policy that solves
\begin{align} %\nonumber
\label{eq:optimization-pi-q}
\underset{\substack{\pi=(\mathcal{I},h), \\q \in \Delta_{|\boldsymbol{\Psi}^{+}(\mathcal{I})|} }}{\text{maximize}} {\Bigg\{} \hspace{-0.5mm} \sum_{\boldsymbol{\psi} \in \boldsymbol{\Psi}^{+}(\mathcal{I})} \hspace{-2mm} q(\boldsymbol{\psi}) \tilde{r}_{t}(h(\boldsymbol{\psi}),\boldsymbol{\psi})-\sum_{i \in \mathcal{I}}\tilde{\boldsymbol{c}}_{t}[i]~{\Bigg|}
\hspace{-0.5mm} \sum_{\boldsymbol{\psi} \in \boldsymbol{\Psi}^{+}(\mathcal{I})} \hspace{-2mm} \left|q(\boldsymbol{\psi})-\hat{p}_{t}(\boldsymbol{\psi})\right|\leq C_{t}(\mathcal{I}) \hspace{-0.5mm} {\Bigg\}},
\end{align}
where $C_{t}(\mathcal{I}) = \min\left\{ 1, \sqrt{ \frac{2 \boldsymbol{\Psi}_{tot} \log{(2 T |\mathcal{P}(\mathcal{D})|/\delta)}}{N_{t}(\mathcal{I})}} \right\}$ and $\Delta_{|\boldsymbol{\Psi}^{+}(\mathcal{I})|}$ is a simplex in $|\boldsymbol{\Psi}^{+}(\mathcal{I})|$ dimensions. The optimization problem (\ref{eq:optimization-pi-q}) can be reduced to the following optimization problem 
% (See Section 1 of supplementary material for details.). %
(See Appendix \ref{app:Reduct} for details).
% a set of convex optimization problems.
\begin{align}
\label{eq:optimization}
\hat{V}_{t}(\mathcal{I})=\underset{q \in \Delta_{|\boldsymbol{\Psi}^{+}(\mathcal{I})|}}{\text{maximize}}~
&{\Bigg\{} \sum_{\boldsymbol{\psi} \in \boldsymbol{\Psi}^{+}(\mathcal{I})} q(\boldsymbol{\psi}) \tilde{r}_{t}^{\ast}(\boldsymbol{\psi}) - \sum_{i \in \mathcal{I}} \tilde{\boldsymbol{c}}_{t}[i] ~{\Bigg|}~ \sum_{\boldsymbol{\psi} \in \boldsymbol{\Psi}^{+}(\mathcal{I})} \left|q(\boldsymbol{\psi}) - \hat{p}_{t}(\boldsymbol{\psi})\right| \leq C_{t}(\mathcal{I}) {\Bigg\}},
\end{align}
where $\tilde{r}_{t}^{\ast}(\boldsymbol{\psi}) = \max_{a \in \mathcal{A}} \tilde{r}_{t}(a, \boldsymbol{\psi})$ is the optimistic reward estimate of the partial state vector $\boldsymbol{\psi}$ at time $t$. Problem (\ref{eq:optimization}) is solved by ranging the value of $q$ over the plausible candidate set of probabilities for $p(\boldsymbol{\psi})$. We denote the value of $q$ that solves (\ref{eq:optimization}) at time $t$ by $\tilde{p}_{t}(\boldsymbol{\psi})$. Note that, for each $\mathcal{I}$, the probability $\tilde{p}_{t}(\boldsymbol{\psi})$ denotes the optimistic probability estimate of the partial state vector $\boldsymbol{\psi} \in \boldsymbol{\Psi}^{+}(\mathcal{I})$ at time $t$. Moreover, $\hat{V}_{t}(\mathcal{I})$ represents the optimistic gain of a policy $\pi_{t}(\mathcal{I}) = (\mathcal{I}, \hat{h}_{t}(\boldsymbol{\psi}))$ that selects the observation set $\mathcal{I}$ and the action $\hat{h}_{t}(\boldsymbol{\psi})$ for any $\boldsymbol{\psi} \in \boldsymbol{\Psi}^{+}(\mathcal{I})$ at time $t$.
% observation probability estimate
% Note that since $\tilde{r}_{t}(a, \boldsymbol{\psi})$ is an optimistic (upper) estimate of $\bar{r}_{t}(a, \boldsymbol{\psi})$ (with high probability), and $\tilde{\boldsymbol{c}}_{t}[i]$ is a pessimistic (lower) estimate of $\bar{\boldsymbol{c}}_{t}[i]$ (with high probability), then $\hat{V}_{t}(\mathcal{I})$ is an optimistic estimate of $V_{t}(\mathcal{I})$. 
% We denote the result of this final optimization problem by $\hat{V}_{t}(\mathcal{I})$, which represents the optimistic gain of a policy $\pi_{t}(\mathcal{I}) = (\mathcal{I}, \hat{h}_{t}(\boldsymbol{\psi}))$ that selects the observation set $\mathcal{I}$ and the action $\hat{h}_{t}(\boldsymbol{\psi})$ for any $\boldsymbol{\psi} \in \boldsymbol{\Psi}^{+}(\mathcal{I})$ at time $t$.

% The algorithm then acts optimistically by choosing the observation set with the largest $\hat{V}_{t}(\mathcal{I})$ index. More specifically, 
At each time $t$, our algorithm solves (\ref{eq:optimization}) and acts optimistically by choosing the observation set and determining the action selection strategy as
\begin{align}
\label{eq:SelectedObservationSet}
    \hat{\mathcal{I}}_{t} = \argmaxA_{\mathcal{I} \in \mathcal{P}(\mathcal{D})}~\hat{V}_{t}(\mathcal{I}),
\end{align}
and
\begin{align}
\label{eq:DeterminedActionStrategy}
    \hat{h}_{t}(\boldsymbol{\psi}) = \argmaxA_{a \in \mathcal{A}}~\hat{r}_{t}(a, \boldsymbol{\psi}) + C_{t}(a, \boldsymbol{\psi}; w),
\end{align}
respectively. 
Afterward, NCC-UCRL2 pays the costs corresponding to the selected observation set $\hat{\mathcal{I}}_{t}$, observes the partial state vector $\boldsymbol{\psi}_{t} \in \boldsymbol{\Psi}^{+}(\hat{\mathcal{I}}_{t})$, and takes the action $a_{t} = \hat{h}_{t}(\boldsymbol{\psi}_{t})$. Finally, it receives the corresponding reward $r_{t}$ and updates the counters.
%-------------------------------------> Section Regret Analysis
\section{Theoretical Analysis}
\label{sec:regretanalysis}
In this section, we analyze the regret performance of NCC-UCRL2 algorithm in stationary and non-stationary environments. We first prove an upper bound on the expected regret of our algorithm by assuming that there is no change point in the environment. In the stationary case, we can choose $w = \Theta(T)$ to exploit the entire collected data when estimating the mean rewards and mean costs. In this case, as expected, NCC-UCRL2 achieves a sublinear regret.
%-------------------------------------> Theorem Regret Analysis
\begin{theorem} 
\label{thm:regret-stationary} 
If $\Upsilon_{T} = 0$, i.e., when the environment is stationary, with probability at least $1-3\delta$, the expected regret of NCC-UCRL2 is upper bounded as
\begin{align} \nonumber
\label{eq:final-regret-bound}
\mathcal{R}_{T}(\Pi) 
&\leq O {\Bigg(} T  {\Big(} \sqrt{ \frac{A \boldsymbol{\Psi}_{tot} \log{(T A \boldsymbol{\Psi}_{tot} w/\delta)}}{w} } + D \sqrt{ \frac{\log{(T D w/\delta)}}{w} } {\Big)} \\ \nonumber
&\hspace{20mm}+ \sqrt{T \log{(1/\delta)}} {\Big(} \sqrt{  A \boldsymbol{\Psi}_{tot} \log{(T A \boldsymbol{\Psi}_{tot} w/\delta)} } + D \sqrt{ \log{(T D w/\delta)} } {\Big)} \\
&\hspace{80mm}+ \sqrt{T |\mathcal{P}(D)| \boldsymbol{\Psi}_{tot} \log{(T |\mathcal{P}(\mathcal{D})|/\delta)}} {\Bigg)}.
\end{align}
Choosing $w = T$ results in
%\Theta(T)
\begin{align} \nonumber
\label{eq:final-regret-bound-2}
% \mathcal{R}_{T}(\Pi) \leq
% &~O {\Bigg(} \sqrt{ T A \boldsymbol{\Psi}_{tot} \log{(T A \boldsymbol{\Psi}_{tot} /\delta)} } + D \sqrt{ T \log{(T D /\delta)} } \\
% &\hspace{5mm}+ \sqrt{T \log{(1/\delta)}} {\Big(} \sqrt{  A \boldsymbol{\Psi}_{tot} \log{(T A \boldsymbol{\Psi}_{tot} /\delta)} } + D \sqrt{ \log{(T D /\delta)} } {\Big)} {\Bigg)}.
\mathcal{R}_{T}(\Pi) 
&\leq O {\Bigg(} {\Big(} 1 + \sqrt{\log{(1/\delta)}} {\Big)} {\Big(} \sqrt{ T A \boldsymbol{\Psi}_{tot} \log{(T A \boldsymbol{\Psi}_{tot} /\delta)} } + D \sqrt{ T \log{(T D /\delta)} } {\Big)} \\
&\hspace{80mm}+ \sqrt{T |\mathcal{P}(D)| \boldsymbol{\Psi}_{tot} \log{(T |\mathcal{P}(\mathcal{D})|/\delta)}} {\Bigg)}.
\end{align}
\end{theorem}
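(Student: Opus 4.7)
My plan is to follow the standard optimism-based UCRL2-style regret analysis, adapted to three simultaneous confidence objects: rewards, costs, and partial state probabilities. First, I would set up a good event on which, for every $(a,\boldsymbol{\psi},t)$, $(i,t)$, and $(\mathcal{I},t)$, we have $|\hat{r}_{t}(a,\boldsymbol{\psi})-\bar{r}_{t}(a,\boldsymbol{\psi})|\leq C_{t}(a,\boldsymbol{\psi};w)$, $|\hat{\boldsymbol{c}}_{t}[i]-\bar{\boldsymbol{c}}_{t}[i]|\leq C_{t}(i;w)$, and $\sum_{\boldsymbol{\psi}\in\boldsymbol{\Psi}^{+}(\mathcal{I})}|\hat{p}_{t}(\boldsymbol{\psi})-p(\boldsymbol{\psi})|\leq C_{t}(\mathcal{I})$. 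The first two follow from Hoeffding's inequality applied to the windowed empirical averages; the third follows from Weissman's $L_{1}$ deviation bound for the empirical distribution on $\boldsymbol{\Psi}^{+}(\mathcal{I})$. Union-bounding each family over its index set contributes at most $\delta$ to the failure probability, giving total failure $\leq 3\delta$ as required.

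On the good event, I would invoke optimism to show $\rho_{t}^{\ast}\leq \hat{V}_{t}(\hat{\mathcal{I}}_{t})$: the true distribution $p$ is feasible in the inner maximization of (\ref{eq:optimization-pi-q}) and $\tilde{r}_{t}\geq \bar{r}_{t}$, $\tilde{\boldsymbol{c}}_{t}\leq \bar{\boldsymbol{c}}_{t}$, so the objective at $(\pi_{t}^{\ast},p)$ upper-bounds $\rho_{t}^{\ast}$, and maximizing $\hat{V}_{t}$ over $\mathcal{I}$ upper-bounds this further. I would then decompose the per-round gap $\hat{V}_{t}(\hat{\mathcal{I}}_{t})-\rho_{t}^{\pi_{t}}$ by inserting an intermediate quantity that replaces $(\tilde{r}_{t},\tilde{\boldsymbol{c}}_{t})$ with $(\bar{r}_{t},\bar{\boldsymbol{c}}_{t})$ while keeping $\tilde{p}_{t}$, and then replaces $\tilde{p}_{t}$ with $p$. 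This yields three telescoping pieces that are bounded, on the good event, by $2\sum_{\boldsymbol{\psi}} \tilde{p}_{t}(\boldsymbol{\psi}) C_{t}(\hat{h}_{t}(\boldsymbol{\psi}),\boldsymbol{\psi};w)$, $2\sum_{i\in\hat{\mathcal{I}}_{t}} C_{t}(i;w)$, and $2 C_{t}(\hat{\mathcal{I}}_{t})$ respectively (using $\bar{r}_{t}\in[0,1]$ for the probability piece).

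To sum these over $t=1,\ldots,T$, I would first convert the $\tilde{p}_{t}$-weighted reward sum into a sum over realized $(a_{t},\boldsymbol{\psi}_{t})$, paying $\|\tilde{p}_{t}-p\|_{1}\leq 2 C_{t}(\hat{\mathcal{I}}_{t})$ (absorbed into the probability term) and introducing a martingale-difference sequence whose Azuma--Hoeffding concentration produces the $\sqrt{T\log(1/\delta)}$ factor in (\ref{eq:final-regret-bound}). The remaining sums $\sum_{t} C_{t}(a_{t},\boldsymbol{\psi}_{t};w)$ and $\sum_{t}\sum_{i\in\hat{\mathcal{I}}_{t}} C_{t}(i;w)$ are controlled by sliding-window pigeonhole: partition $[1,T]$ into $\lceil T/w\rceil$ blocks of length $w$ and apply Cauchy--Schwarz within each block, using that any single counter grows by at most $w$ per block. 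This yields the $T\sqrt{A\boldsymbol{\Psi}_{tot}\log(TA\boldsymbol{\Psi}_{tot}w/\delta)/w}$ and $TD\sqrt{\log(TDw/\delta)/w}$ contributions. Finally, $\sum_{t} C_{t}(\hat{\mathcal{I}}_{t})$ is handled by a standard pigeonhole over $\mathcal{I}\in\mathcal{P}(\mathcal{D})$ without windowing, since the partial state distribution is stationary and $N_{t}(\mathcal{I})$ never resets, producing the $\sqrt{T|\mathcal{P}(\mathcal{D})|\boldsymbol{\Psi}_{tot}\log(T|\mathcal{P}(\mathcal{D})|/\delta)}$ term. Plugging $w=T$ collapses the first group into $\sqrt{T}$-order terms, directly giving (\ref{eq:final-regret-bound-2}).

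The principal obstacle is the joint bookkeeping of the three confidence objects under a single optimistic policy, and in particular the probability term: $\tilde{p}_{t}$ arises from an inner maximization rather than from a frequency count, so rewards weighted by $\tilde{p}_{t}$ cannot be controlled directly by a pigeonhole over visited $(a_{t},\boldsymbol{\psi}_{t})$. The triangle-inequality split that isolates the $\tilde{p}_{t}$-weighting from the reward-deviation term, combined with the martingale step above, is the crucial device that decouples the three sources of error. A secondary subtlety is that $\hat{h}_{t}(\boldsymbol{\psi})$ is defined for every $\boldsymbol{\psi}\in\boldsymbol{\Psi}^{+}(\hat{\mathcal{I}}_{t})$ but only one $\boldsymbol{\psi}$ is realized at round $t$, so the counters in (\ref{eq:CounterSet-a-psi})--(\ref{eq:CounterSet-I-psi}) must be advanced carefully and the telescoping kept consistent across rounds.
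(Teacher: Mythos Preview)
Your proposal is correct and follows essentially the same route as the paper's proof: Hoeffding and Weissman bounds to set up the three-part good event with failure probability at most $3\delta$, optimism to replace $\rho_{t}^{\ast}$ by the optimistic value, a two-piece telescoping that separates the probability deviation from the reward/cost deviation, an Azuma--Hoeffding martingale step to pass from $p$-weighted quantities to the realized $(a_{t},\boldsymbol{\psi}_{t})$, and sliding-window block pigeonhole for the resulting counter sums (with a non-windowed pigeonhole over $\mathcal{P}(\mathcal{D})$ for the probability term). The only cosmetic difference is the order of the telescoping---you swap $(\tilde{r},\tilde{\boldsymbol{c}})\to(\bar{r},\bar{\boldsymbol{c}})$ under $\tilde{p}_{t}$ and then $\tilde{p}_{t}\to p$, whereas the paper first swaps $\tilde{p}_{t}\to p$ under $\tilde{g}_{t}$ and then $\tilde{g}_{t}\to g_{t}$---which yields the same bounds.
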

%-------------------------------------
%-------------------------------------> Proof
\begin{proof}
See Appendix \ref{app:TheoremOneProof}.
% See Section 4.1 of supplementary material.
\end{proof}
%-------------------------------------

% \textcolor{blue}{\textit{Theorem \ref{thm:regret-stationary}} states the agent's regret by using only recent $w$ rewards- and costs observations in stationary environments.}
The proof of Theorem \ref{thm:regret-stationary} is, to some extent, based on state-of-the-art techniques used in the literature to analyze regret bounds for optimistic bandit algorithms; nevertheless, some non-conventional parts appear in our derivation because we estimate the partial state probabilities using all observations, while the mean rewards and mean costs using the most recent ones in the window. Note that, in the optimization problem (\ref{eq:optimization-pi-q}), we use \textit{optimistic estimations} for rewards and partial state probabilities, whereas we rely on \textit{pessimistic ones} for costs by using the lower confidence bound on the mean costs in (\ref{eq:pessimisticcosts}). That results in several technical challenges in the theoretical analysis, for example, in Lemma \ref{lem:events}, where we bound the probability of failure 
% (See Section 4 of supplementary material). Lemma 3
(See Appendix \ref{app:mainresults}).
Moreover, proving the bound in (\ref{eq:alpha}) is challenging as the algorithm can choose more than one feature at a time. Hence, in (\ref{eq:alpha}), we consider the worst case of observing all the $D$ features' states at each time $t$.
% Please see the discussion below Equation (49) in Section 4.1 of supplementary material.
% Please s
% (49) (49)

In the next theorem, we establish an upper bound on the expected regret of NCC-UCRL2 in non-stationary environments. The regret analysis for non-stationary case is based on the theoretical analysis in Theorem \ref{thm:regret-stationary}.
%-------------------------------------> Theorem Regret Analysis
\begin{theorem} 
\label{thm:regret-nonstationary} 
If $\Upsilon_{T} > 0$, i.e., when the environment is non-stationary, with probability at least $1-3\delta$, the expected regret of NCC-UCRL2 is upper bounded as
\begin{align} \nonumber
\label{eq:final-regret-bound-nonstationary}
\mathcal{R}_{T}(\Pi) 
&\leq O {\Bigg(} w \Upsilon_{T} + T {\Big(} \sqrt{ \frac{A \boldsymbol{\Psi}_{tot} \log{(T A \boldsymbol{\Psi}_{tot} w/\delta)}}{w} } + D \sqrt{ \frac{\log{(T D w/\delta)}}{w} } {\Big)} \\ \nonumber
&\hspace{20mm}+ \sqrt{\Upsilon_{T} T  \log{(1/\delta)}} {\Big(} \sqrt{  A \boldsymbol{\Psi}_{tot} \log{(T A \boldsymbol{\Psi}_{tot} w/\delta)} } + D \sqrt{ \log{(T D w/\delta)} } {\Big)} \\
&\hspace{80mm}+ \sqrt{T |\mathcal{P}(D)| \boldsymbol{\Psi}_{tot} \log{(T |\mathcal{P}(\mathcal{D})|/\delta)}} {\Bigg)}.
\end{align}
Choosing $w = (T/\Upsilon_{T})^{2/3}$ results in
\begin{align} \nonumber
\label{eq:final-regret-bound-nonstationary-2}
% \mathcal{R}_{T}(\Pi) \leq
% &~O {\Bigg(} T^{2/3} \Upsilon_{T}^{1/3} {\Big(} \sqrt{ A \boldsymbol{\Psi}_{tot} \log{(T A \boldsymbol{\Psi}_{tot} /\delta)} } + D \sqrt{ \log{(T D /\delta)} } {\Big)} \\ \nonumber
% &\hspace{10mm}+ \sqrt{\Upsilon_{T} T  \log{(1/\delta)}} {\Big(} \sqrt{  A \boldsymbol{\Psi}_{tot} \log{(T A \boldsymbol{\Psi}_{tot} /\delta)} } + D \sqrt{ \log{(T D /\delta)} } {\Big)} \\
% &\hspace{70mm}+ \sqrt{T |\mathcal{P}(D)| \boldsymbol{\Psi}_{tot} \log{(T |\mathcal{P}(\mathcal{D})|/\delta)}} {\Bigg)}.
\mathcal{R}_{T}(\Pi) 
&\leq O {\Bigg(} {\Big(} T^{2/3} \Upsilon_{T}^{1/3} + \sqrt{\Upsilon_{T} T  \log{(1/\delta)}} {\Big)} {\Big(} \sqrt{ A \boldsymbol{\Psi}_{tot} \log{(T A \boldsymbol{\Psi}_{tot} /\delta)} } + D \sqrt{ \log{(T D /\delta)} } {\Big)} \\
&\hspace{80mm}+ \sqrt{T |\mathcal{P}(D)| \boldsymbol{\Psi}_{tot} \log{(T |\mathcal{P}(\mathcal{D})|/\delta)}} {\Bigg)}.
\end{align}
\end{theorem}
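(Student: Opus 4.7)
The plan is to reduce the non-stationary regret analysis to repeated invocations of the stationary bound from Theorem \ref{thm:regret-stationary} across the $\Upsilon_T + 1$ stationary segments, plus a separate accounting for rounds where the sliding window straddles a change point. First, I would partition $\{1,\ldots,T\}$ into the $\Upsilon_T+1$ stationary segments induced by the change points, and within each segment distinguish \emph{clean} rounds $t$ (for which the window $(t-w,t)$ lies entirely inside that segment) from \emph{contaminated} rounds (for which the window overlaps a change point). Since each of the $\Upsilon_T$ change points can contaminate at most $w$ subsequent rounds, the number of contaminated rounds is at most $w \Upsilon_T$, and per-round regret is bounded by a constant (gains are in $[-D,1]$), so their total contribution to $\mathcal{R}_T(\Pi)$ is $O(w \Upsilon_T)$. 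This yields the leading $w\Upsilon_T$ term of \eqref{eq:final-regret-bound-nonstationary}.

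For clean rounds, the sliding-window estimates $\hat{r}_t$ and $\hat{\boldsymbol{c}}_t$ use only data from a single stationary segment, so they are unbiased for the current mean reward / mean cost, and the concentration arguments behind Lemma \ref{lem:events} (the ``good event'' under which the stationary proof operates) carry over unchanged with the same failure probability $\delta$ contributions. The estimate $\hat{p}_t(\boldsymbol{\psi})$ uses \emph{all} past observations, but by the problem formulation the state-vector distribution $\mathbb{P}[\boldsymbol{\Phi}=\boldsymbol{\phi}]$ is fixed over time, so the probability-simplex confidence ball $C_t(\mathcal{I})$ remains valid globally; the last term $\sqrt{T |\mathcal{P}(D)| \boldsymbol{\Psi}_{tot}\log(\cdot)}$ in \eqref{eq:final-regret-bound-nonstationary} is therefore inherited as-is from Theorem \ref{thm:regret-stationary}, not multiplied by any factor depending on $\Upsilon_T$.

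Within each stationary segment of length $T_k$, the clean-round regret is bounded by the Theorem \ref{thm:regret-stationary} expression with $T$ replaced by $T_k$ (dropping the probability term, which is aggregated globally as above). I would then sum over $k=1,\ldots,\Upsilon_T+1$ and apply the Cauchy--Schwarz / concavity bound $\sum_k \sqrt{T_k} \le \sqrt{(\Upsilon_T+1) T}$ to the $\sqrt{T_k \log(\cdot)}$-type terms, and $\sum_k T_k = T$ to the $T_k/\sqrt{w}$-type terms. This converts the per-segment $\sqrt{T_k\log(1/\delta)}\,(\cdot)$ contributions into the $\sqrt{\Upsilon_T T \log(1/\delta)}\,(\cdot)$ term, while the $T_k\bigl(\sqrt{A\boldsymbol{\Psi}_{tot}\log(\cdot)/w} + D\sqrt{\log(\cdot)/w}\bigr)$ contributions sum to the $T(\cdot)/\sqrt{w}$ term in \eqref{eq:final-regret-bound-nonstationary}. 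Combining these with the $O(w\Upsilon_T)$ contamination term gives \eqref{eq:final-regret-bound-nonstationary}. Finally, setting $w = (T/\Upsilon_T)^{2/3}$ equates $w\Upsilon_T = T^{2/3}\Upsilon_T^{1/3}$ and $T/\sqrt{w} = T^{2/3}\Upsilon_T^{1/3}$, yielding \eqref{eq:final-regret-bound-nonstationary-2}.

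The main obstacle I foresee is the rigorous bookkeeping for the contaminated rounds and the segment-wise concentration: one must verify that the ``good event'' from Theorem \ref{thm:regret-stationary} (Lemma \ref{lem:events}) can be taken as a single high-probability event of probability $\ge 1-3\delta$ across \emph{all} segments simultaneously --- essentially because union-bounding over $t \le T$ already covers every $(a,\boldsymbol{\psi})$ pair, so re-indexing by segment does not inflate $\delta$. A secondary technicality is handling the clean-round regret at times $t$ where $N_t(a,\boldsymbol{\psi};w)$ or $N_t(i;w)$ is small within the segment: the $\min\{1,\cdot\}$ truncation in the confidence widths absorbs the ``early'' rounds after each change, so the pigeonhole-style telescoping sums used in the stationary proof still apply per segment with only a constant-factor loss.
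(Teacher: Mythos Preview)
Your proposal is correct and matches the paper's own proof essentially line for line: the paper defines the set $\Gamma(w)$ of rounds whose window contains no change point (your ``clean'' rounds), pays $w\Upsilon_T$ for the complement, restricts the reward/cost concentration events $\mathcal{E}_1,\mathcal{E}_3,\mathcal{E}_4$ to $\Gamma(w)$ while keeping the state-probability event $\mathcal{E}_2$ global, and then sums the per-segment $\Delta_2$-type bounds via Jensen (your Cauchy--Schwarz step $\sum_k\sqrt{T_k}\le\sqrt{(\Upsilon_T+1)T}$) to produce the $\sqrt{\Upsilon_T T}$ factor. Your anticipated obstacles---single good event at probability $\ge 1-3\delta$ across all segments, and small counts after change points---are exactly the points the paper handles in the way you describe.
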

%-------------------------------------
%-------------------------------------> Proof
\begin{proof}
See Appendix \ref{app:TheoremTwoProof}.
% See Section 4.2 of supplementary material.
\end{proof}
%-------------------------------------
% The regret analysis for non-stationary case is based on the theoretical analysis in Theorem \ref{thm:regret-stationary}. During the stationary phases, the algorithm suffers the same sublinear regret shown in Theorem \ref{thm:regret-stationary}. When experiencing a change point, however, the algorithm suffers an extra $O(w)$ regret, while the second last term is scaled by a factor of $\sqrt{\Upsilon_{T}}$.

% (please see (20) and (22))
The analysis in Theorem \ref{thm:regret-nonstationary} is based on Theorem \ref{thm:regret-stationary}. During the stationary phases, the algorithm suffers the same sublinear regret proved in Theorem \ref{thm:regret-stationary}. When experiencing a change point, the algorithm suffers an extra $O(w)$ regret, while the second term in (\ref{eq:final-regret-bound}) scales by a factor of $\sqrt{\Upsilon_{T}}$. Our algorithm does not require the knowledge of $\Upsilon_{T}$ and guarantees a sublinear regret bound with a proper choice of $w$, as given by (\ref{eq:final-regret-bound-nonstationary-2}).
% The regret bound in (\ref{eq:final-regret-bound-nonstationary}) holds for any number of change points $0 < \Upsilon_{T}$.
% Even in that case, with a proper choice of $w$, NCC-UCRL2 achieves a sublinear regret bound in time, as given by (\ref{eq:final-regret-bound-nonstationary-2}).
% Numerical experiments validate the theory.
%-------------------------------------> Section: Numerical Analysis
\section{Numerical Analysis}
\label{sec:NumAnalysis}
In this section, via numerical experiments, we provide more insights into the effects of costly features on the performance of learning algorithms. Besides, we clarify how our proposed algorithm mitigates the adverse effects by observing only a subset of features' states. Moreover, we show that our algorithm efficiently adapts to environmental changes. We also compare the performance of our algorithm with conventional benchmarks using a real-world dataset. The source code for our algorithm and experiments in this paper are publicly available.\footnote{ \url{https://github.com/saeedghoorchian/NCC-Bandits.git}}
% The source code for our algorithm and experiments in this paper can be accessed via the anonymized link in the footnote.\footnote{Source code: \url{https://anonymous.4open.science/r/ncc_anonymized/}}
%are available publicly

% \subsubsection{Benchmark Policies}
\textbf{Benchmark Policies:}
We compare NCC-UCRL2 with the state-of-the-art contextual and context-agnostic algorithms. Contextual bandit algorithms in our experiment include \textbf{Sim-OOS} \cite{Atan21:DDO}, \textbf{PS-LinUCB} \cite{Xu20:CBB}, and \textbf{LinUCB} \cite{Li10:ACB}. Sim-OOS is designed for bandit problems with fixed costs for features' states observation in stationary environments. PS-LinUCB is designed for piece-wise stationary environments, but it is cost-agnostic. LinUCB is the final contextual bandit algorithm that is neither designed for changing environments nor costly features. In our experiment, similar to our algorithm, Sim-OOS can select any subset of features for state observation at each time of play. As a result, at each time, they pay the corresponding cost only for those selected features. PS-LinUCB and LinUCB always observe all features' states. Hence, they pay the full cost vector. We consider \textbf{UCB1} and \textbf{$\varepsilon$-Greedy} \cite{Auer02:FTA} as context-agnostic benchmarks as standard methods despite their weakness due to being blind to contextual information. We also consider a \textbf{random} policy that selects an action uniformly at random at each time. Context-agnostic algorithms do not incur any costs and only collect the rewards.
%

% \subsubsection{Nursery Dataset} 
\textbf{Nursery Dataset:}
We assess the performance of our algorithm on the Nursery dataset from the UCI Machine Learning Repository \cite{Dua17:UCI}. The dataset, derived from a hierarchical decision support system, includes applications for nursery schools and their target ranks that prioritize the applications and determine whether the child is recommended to be admitted to a nursery school. The applications are described using features that represent the socioeconomic status of the family. We consider $D = 5$ features: (i) Form of the family, (ii) number of children, (iii) financial standing of the family, (iv) housing conditions, and (v) health conditions of the applicant. In our experiment, we work with $A = 3$ target rank values ranging from $1$ to $3$ that indicate the given application is \textit{not recommended}, \textit{accepted with priority}, and \textit{accepted with special priority}, respectively. Taking an action is equivalent to recommending one particular rank for the given application. The agent receives reward $1$ if the correct rank is recommended, otherwise the reward is $0$. 

% \subsubsection{Experimental Setup}
\textbf{Experimental Setup:}
To simulate a piece-wise stationary reward generating process, we follow the approach proposed by \cite{Bouneffouf17:CAB}. At each change point, we shift all the target labels cyclically. This guarantees that the expected reward is piece-wise constant. In the context of decision support system for nursery school applications, such change points correspond to changes in preference of the decision-making authority over the applications.

We endow the features with random cost values. At each time $t$, the random cost of observation for each feature's state follows a normal distribution with a standard deviation of $0.001$ and a piece-wise constant mean. We select the mean values of cost distributions uniformly at random from the interval $[0.03; 0.08]$. Therefore, the total observation cost of a full state vector at each time amounts to $15-40\%$ of the maximum reward. The range of costs are chosen based on two factors: (i) It should be high enough to prevent the algorithm from observing all features' states at all times and, (ii) low enough to incentivize the algorithm considerably to pay for state observation in order to find the optimal observations. In the nursery application ranking scenario, the state observation costs can be thought of as the efforts required to acquire the information about the applicant. Such efforts may include the time or other related expenses spent to obtain the information.

% perform cross-validation in order
% Therefore, we use roughly $20\%$ of the total data as the tuning set.
We split the data into train and validation (tuning) sets in approximately 80:20 ratio with $10000$ and $2630$ data samples, respectively. More specifically, we sample $2630$ data points at random and use them to tune the parameters of algorithms. The parameters of those benchmark algorithms that are originally designed for stationary environments are tuned without introducing non-stationarity in the validation set. To tune the parameters of NCC-UCRL2 and PS-LinUCB, we consider $2$ change points in mean rewards, but no change points in mean costs. For more details on the tuning process of the parameters, please see Appendix \ref{app:AddInfo}.
% Section 5 of the supplementary material.

We run the experiment for $T = 10000$ time steps by revealing applications to the algorithms one at a time. We consider a maximum of $\Upsilon_{T} = 7$ change points in our experiment, with change points in the mean rewards and the mean costs at times $\{1000, 2000, 5000, 8000\}$ and $\{3000, 5000, 7000, 9000\}$, respectively. Note that the change points are not necessarily identical; the mean rewards and mean costs do not always change simultaneously at a change point. In Appendix \ref{app:AddInfo},
% In Section 5 of the supplementary material, 
we elaborate more on the settings of mean rewards and mean costs. \textbf{Table \ref{table:PolicyParams}} 
% 2 in the supplementary material
lists the tuned parameters of algorithms used in our simulation. For NCC-UCRL2, we set $\delta = 0.04$ and choose the window parameter $w = 250$.
% In Appendix \ref{app:AddInfo}, we elaborate more on the settings of mean rewards and mean costs. \textbf{Table \ref{table:PolicyParams}} lists the tuned parameters of algorithms used in our simulation.

% \subsubsection{Regret Comparison}
\textbf{Regret Comparison:}
We run the algorithms using the aforementioned setup. \textbf{Fig. \ref{fig:Regret}} depicts the trend of cumulative regret over time for each policy. We average the results over $5$ independent runs. Here, the instantaneous regret at each time is defined based on the instantaneous gain, which is the obtained reward minus the total paid observation costs at every round. As we see, NCC-UCRL2 detects the changes in the mean rewards or mean costs faster than all other policies and therefore has a superior performance. Besides, as NCC-UCRL2 uses only the last $w$ observations to estimate the mean rewards and mean costs, it has a smooth curve around change points. These advantages are despite the fact that NCC-UCRL2 only observes a subset of features' states at each time.
%------------------------------------->
\begin{figure}[t!]
    \centering
    \includegraphics[width=0.65\textwidth]{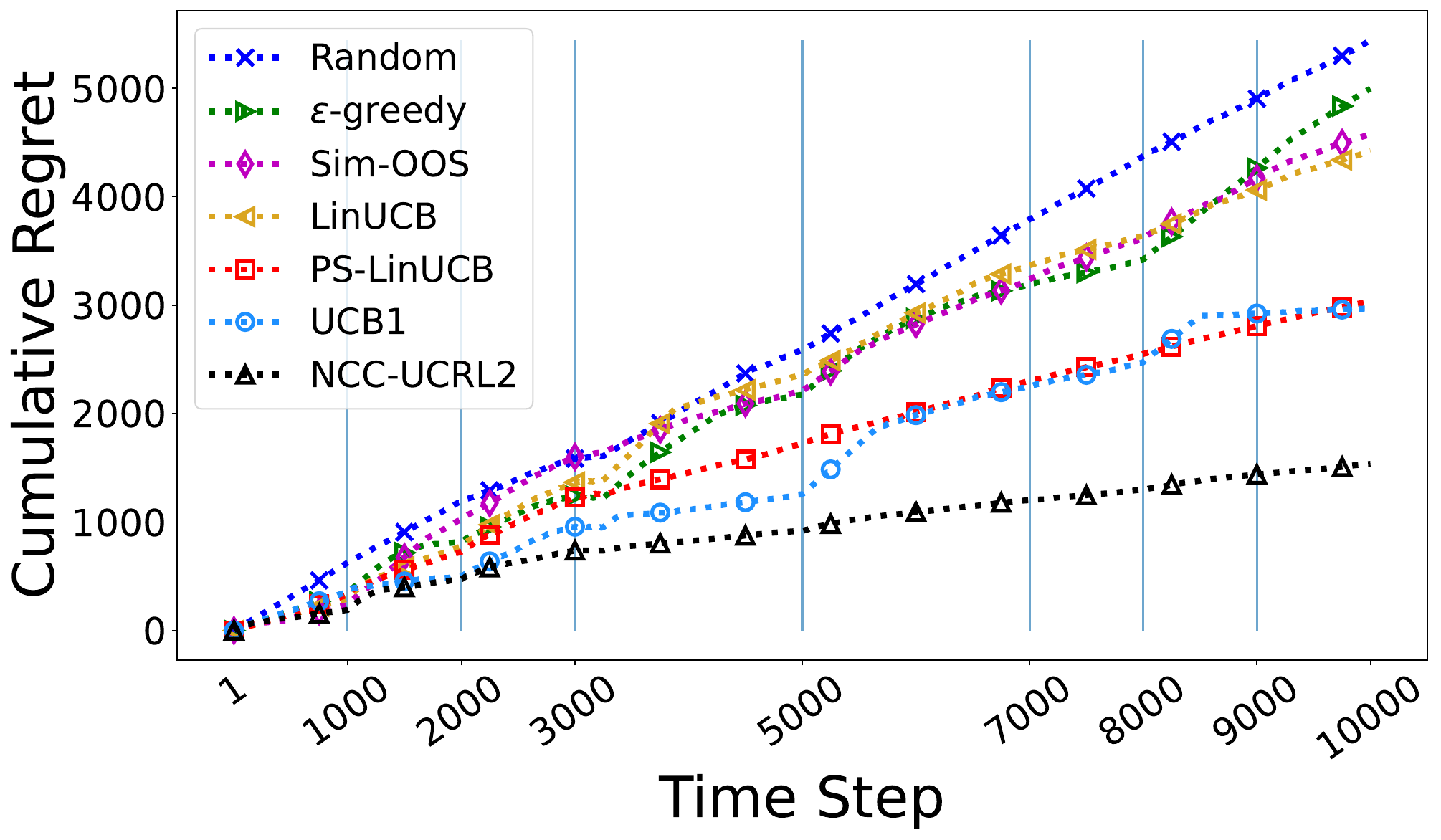}
    \caption{Cumulative regret of different policies. Vertical lines show the change points.}
    % dotted
    \label{fig:Regret}
\end{figure}
%-------------------------------------

%------------------------------------->
\begin{figure}[b!]
    \centering
    \includegraphics[width=0.7\textwidth]{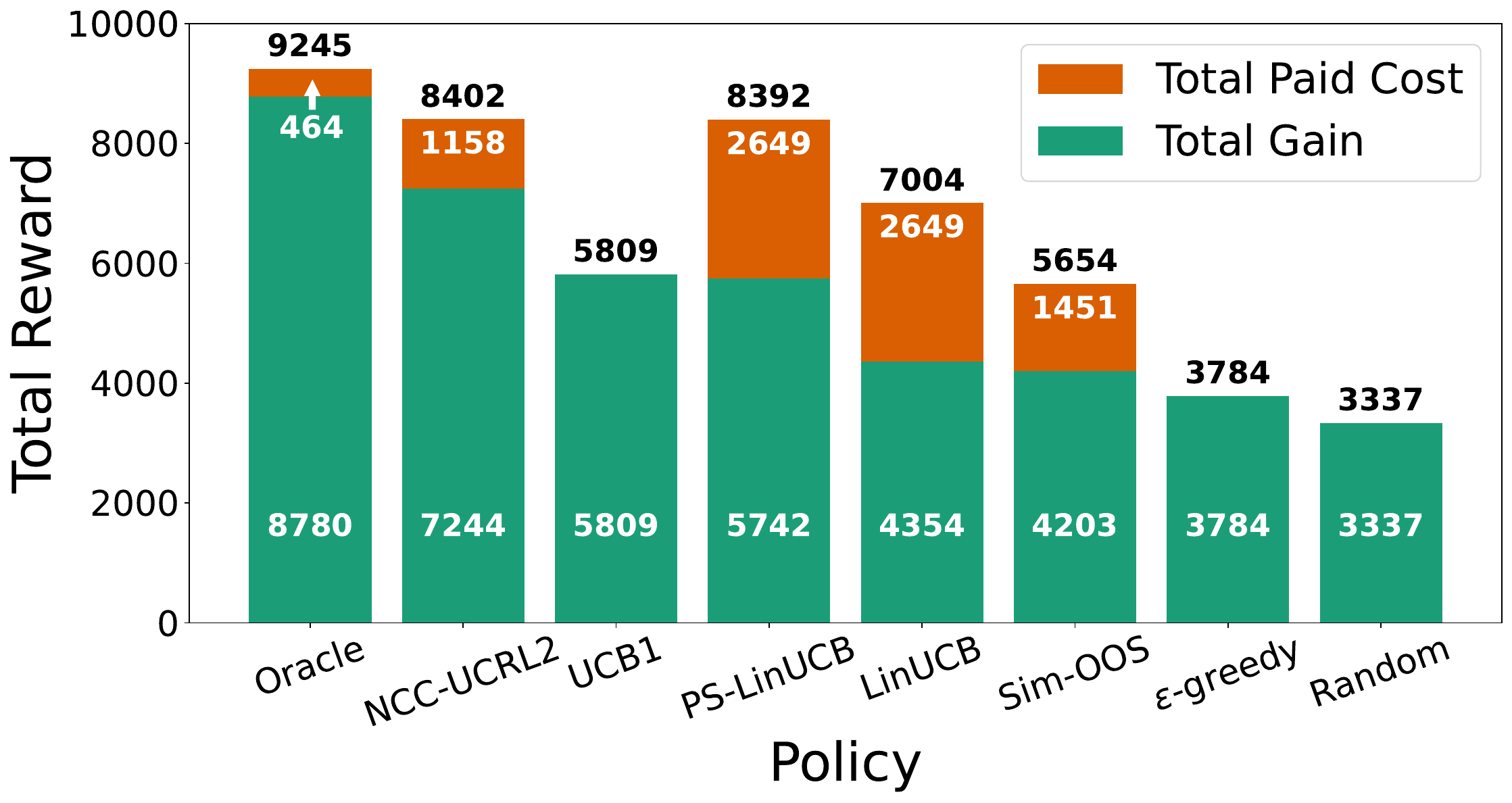}
    \caption{Total reward (number on top of bar), gain (number in green), and cost (number in brown) for each policy. Values are rounded to the nearest integers.}
    \label{fig:RewardBarChart}
\end{figure}
%-------------------------------------
% \subsubsection{Gain Comparison}
\textbf{Gain Comparison:}
In \textbf{Fig. \ref{fig:RewardBarChart}}, we show the policies' total reward, gain, and cost. It also compares them with the oracle. In this figure, the height of each bar shows the total accumulated reward of each policy which is equal to the total gain (green part) plus the total cost (brown part). NCC-UCRL2 accumulates the highest rewards during the experiment among the benchmark policies. The accumulated reward of PS-LinUCB is almost the same as that of our algorithm; it receives only about $0.1\%$ less reward than NCC-UCRL2. However, the total gain of PS-LinUCB is $20\%$ lower due to higher paid costs as it observes all the features' states at all times. On the contrary, NCC-UCRL2 adaptively learns the optimal state observations while it observes only a fraction of features' states at each time. As a result, NCC-UCRL2 incurs less cost, hence a higher performance concerning the accumulated gain. The two counterparts of NCC-UCRL2 and PS-LinUCB that suit stationary environments, i.e., Sim-OOS and LinUCB, exhibit a similar pattern for the total costs; nevertheless, Sim-OOS achieves lower accumulated reward compared to LinUCB, which shows the importance of learning the optimal observations in a non-stationary environment. Note that Sim-OOS fails in our experiment as it does not consider the pessimistic selection of random costs and cannot adapt to drifts.

%------------------------------------->
\begin{figure}[t!]
    \centering
    \includegraphics[width=0.7\textwidth]{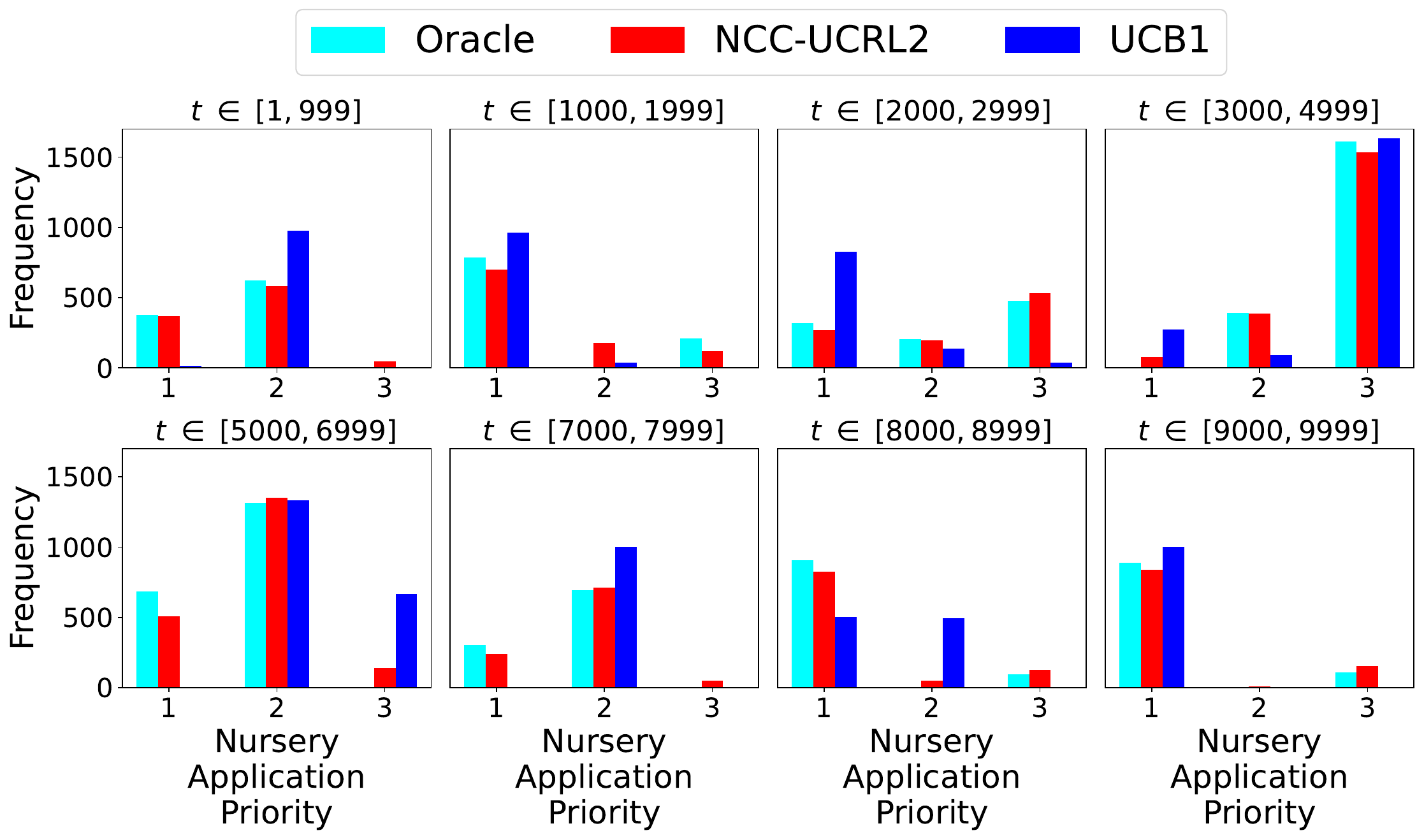}
    \caption{Comparison of priority recommendations of the oracle, NCC-UCRL2, and UCB1 in each stationary period.}
    \label{fig:ArmsHistogram}
\end{figure}
%-------------------------------------
% \subsubsection{Adaptation to the Preference Volatility} 
\textbf{Adaptation to the Preference Volatility:} 
In \textbf{Fig. \ref{fig:ArmsHistogram}}, we plot the histograms of nursery application priorities recommended by the oracle, NCC-UCRL2, and UCB1 for each of the stationary periods. Our algorithm closely follows the arm choice pattern of the oracle, which means that it can quickly adapt to changes in preference over applications. On the other hand, UCB1 cannot always adapt to sudden changes in the environment. We particularly consider UCB1 in this analysis to show the following: Although UCB1 achieves the second highest gain amongst the benchmarks, it fails to provide tailored recommendations when the environment parameters undergo abrupt changes.

We perform further numerical analysis on the performance of NCC-UCRL2 algorithm and present the results in Appendix \ref{app:AddExp}.
% Section 6 of supplementary material.

%-------------------------------------> Section: Conclusion
\section{Conclusion}
\label{sec:Conclusion}
We introduced the NCC bandit framework, where information acquisition is costly and the environment is non-stationary. We developed a decision-making policy, namely NCC-UCRL2, that mitigates the effects of costs by observing only a subset of features. We proved that NCC-UCRL2 achieves a sublinear regret bound in time. Our proposed framework is applicable in several contexts, such as online advertising problems, medical treatment recommendations, edge computing, and stock trading. We applied our method to recommend priority ranks for nursery school applications. The experiments showed that NCC-UCRL2 outperforms several state-of-the-art bandit algorithms.
% The first future research direction would be to extend the current framework by considering delayed feedback. Another potential extension of our work would be to consider sequential state observations, where at each round of decision-making, the agent selects features sequentially to observe their states and improves the future feature selections based on the already selected observations.
We study the general NCC bandit problem, where the reward can take any form, linear or nonlinear. Besides, the number of state observations can be arbitrarily large. A potential future research direction would be to allow for restrictive assumptions on the number of state observations or the space of reward functions. In such cases, the dependence of the regret bound on the number of features and partial states diminishes.
\section{Appendix}
\label{sec:App}
%
% \appendices
%
%-------------------------------------> Reduction
\subsection{Reduction of Optimization Problem (\ref{eq:optimization-pi-q})}
% EDUCTION OF OPTIMIZATION PROBLEM (16)} %(\ref{eq:optimization-pi-q})
\label{app:Reduct}
We can solve the optimization problem (\ref{eq:optimization-pi-q}) by first fixing the observation set $\mathcal{I}$ and the probabilities $q$, and then, maximizing only with respect to the action selection function $h$. For a fixed $\mathcal{I}$ and $q$, let $\hat{h}_{t}^{\mathcal{I},q}(\boldsymbol{\psi})$ denote the action function that maximizes the optimization problem (16). We have $\hat{h}_{t}^{\mathcal{I},q}(\boldsymbol{\psi}) = \hat{h}_{t}(\boldsymbol{\psi}) = \arg\max_{a \in \mathcal{A}} \tilde{r}_{t}(a, \boldsymbol{\psi})$. Therefore, By fixing $h$ to $\hat{h}_{t}^{\mathcal{I},q}$ in (\ref{eq:optimization-pi-q}), we obtain the following optimization problem.
\begin{align} 
\label{eq:optimization-I-q}
\max_{\mathcal{I}, q \in \Delta_{|\boldsymbol{\Psi}^{+}(\mathcal{I})|}} &{\Bigg\{} \sum_{\boldsymbol{\psi} \in \boldsymbol{\Psi}^{+}(\mathcal{I})} q(\boldsymbol{\psi}) \tilde{r}_{t}^{\ast}(\boldsymbol{\psi}) 
- \sum_{i \in \mathcal{I}} \tilde{\boldsymbol{c}}_{t}[i] ~{\Bigg|}~ \sum_{\boldsymbol{\psi} \in \boldsymbol{\Psi}^{+}(\mathcal{I})} \left|q(\boldsymbol{\psi}) - \hat{p}_{t}(\boldsymbol{\psi})\right| \leq C_{t}(\mathcal{I}) {\Bigg\}}.
\end{align}
We solve the problem (\ref{eq:optimization-I-q}) by first fixing the observation set $\mathcal{I}$ and then, maximizing with respect to the probabilities $q$. This results in the optimization problem (\ref{eq:optimization}). %(\ref{eq:optimization}) (17)

%-------------------------------------> Notations
% \subsection{Notations}
\subsection{Notations}
\label{app:notation}
Before proceeding to the proof, in the following we introduce some important notations together with their definitions.
% To avoid any ambiguity, in this document we continue the (equation-) numbering of the main text.

% is the expected reward of action $a$ and partial state vector $\boldsymbol{\psi}$ minus the observation cost of $\boldsymbol{\psi}$. Formally,
We define the expected gain of an action $a$ and a partial state vector $\boldsymbol{\psi}$ as $g_{t}(a, \boldsymbol{\psi}) = \bar{r}_{t}(a, \boldsymbol{\psi}) - \sum_{i \in \mathscr{D}(\boldsymbol{\psi})} \bar{\boldsymbol{c}}_{t}[i]$. In addition, we define $\tilde{g}_{t}(a, \boldsymbol{\psi}) = \tilde{r}_{t}(a, \boldsymbol{\psi}) - \sum_{i \in \mathscr{D}(\boldsymbol{\psi})} \tilde{\boldsymbol{c}}_{t}[i]$. For ease of presentation, we introduce new vector notations. We collect the probability distributions for partial state vectors $\boldsymbol{\psi} \in \boldsymbol{\Psi}^{+}(\hat{\mathcal{I}}_{t})$ in a vector and denote it by $\mathbf{P}(\hat{\mathcal{I}}_{t}) = [p(\boldsymbol{\psi})]_{\boldsymbol{\psi} \in \boldsymbol{\Psi}^{+}(\hat{\mathcal{I}}_{t})}$. Similarly, we define $\tilde{\mathbf{P}}_{t}(\hat{\mathcal{I}}_{t}) = [\tilde{p}_{t}(\boldsymbol{\psi})]_{\boldsymbol{\psi} \in \boldsymbol{\Psi}^{+}(\hat{\mathcal{I}}_{t})}$, $\hat{\mathbf{P}}_{t}(\hat{\mathcal{I}}_{t}) = [\hat{p}_{t}(\boldsymbol{\psi})]_{\boldsymbol{\psi} \in \boldsymbol{\Psi}^{+}(\hat{\mathcal{I}}_{t})}$, $\mathbf{G}_{t}(\hat{\mathcal{I}}_{t}) = [g_{t}(\hat{h}_{t}(\boldsymbol{\psi}), \boldsymbol{\psi})]_{\boldsymbol{\psi} \in \boldsymbol{\Psi}^{+}(\hat{\mathcal{I}}_{t})}$, $\tilde{\mathbf{G}}_{t}(\hat{\mathcal{I}}_{t}) = [\tilde{g}_{t}(\hat{h}_{t}(\boldsymbol{\psi}), \boldsymbol{\psi})]_{\boldsymbol{\psi} \in \boldsymbol{\Psi}^{+}(\hat{\mathcal{I}}_{t})}$. Moreover, we define $n_{t}(\mathcal{I}) = \sum_{\tau = 1}^{t} \mathbbm{1}\{\mathcal{I}_{\tau} = \mathcal{I}\}$.

% which is the solution of (\ref{eq:optimization-pi-q})
Let $\tilde{\rho}_{t}$ denote the optimistic gain at time $t$. Based on the aforementioned definitions, we have $\tilde{\rho}_{t} = \langle \tilde{\boldsymbol{P}}_{t}(\hat{\mathcal{I}}_{t}), \tilde{\boldsymbol{G}}_{t}(\hat{\mathcal{I}}_{t}) \rangle$, where $\langle \cdot, \cdot \rangle$ denotes the dot product between two vectors. Therefore,
\begin{align} \nonumber
    \tilde{\rho}_{t} 
    &= \langle \tilde{\boldsymbol{P}}_{t}(\hat{\mathcal{I}}_{t}), \tilde{\boldsymbol{G}}_{t}(\hat{\mathcal{I}}_{t}) \rangle  \\ \nonumber
    &= \sum_{\boldsymbol{\psi} \in \boldsymbol{\Psi}^{+}(\hat{\mathcal{I}}_{t})} \tilde{p}_{t}(\boldsymbol{\psi}) \tilde{g}_{t}(\hat{h}_{t}(\boldsymbol{\psi}), \boldsymbol{\psi}) \\
    &= \sum_{\boldsymbol{\psi} \in \boldsymbol{\Psi}^{+}(\hat{\mathcal{I}}_{t})} \tilde{p}_{t}(\boldsymbol{\psi}) {\Big[} \hat{r}_{t}(\hat{h}_{t}(\boldsymbol{\psi}), \boldsymbol{\psi}) + C_{t}(\hat{h}_{t}(\boldsymbol{\psi}), \boldsymbol{\psi}; w) - \sum_{i \in \mathscr{D}(\boldsymbol{\psi})} [\hat{\boldsymbol{c}}_{t}[i] - C_{t}(i; w)] {\Big]}.
\end{align}
At each time $t$, we use $\rho_{t}$ to denote the expected gain of the agent that follows our proposed policy. Let $[T] = \{1, 2, \dots, T\}$. We define the following events which we use in the subsequent proofs.
\begin{align} \label{eq:event1}
    \mathcal{E}_{1} &= \mathbbm{1}{\Big\{}\exists t \in [T], ~\textup{s.t.}~ \rho_{t} \leq \tilde{\rho}_{t} {\Big\}}, \\ \label{eq:event2}
    \mathcal{E}_{2} &= \mathbbm{1}{\Big\{}\exists t \in [T], \exists \mathcal{I} \in \mathcal{P}(\mathcal{D}), ~\textup{s.t.}~ \norm{\hat{\mathbf{P}}_{t}(\mathcal{I}) - \mathbf{P}(\mathcal{I})}_{1} \leq C_{t}(\mathcal{I}) {\Big\}}, \\ \label{eq:event3}
    \mathcal{E}_{3} &= \mathbbm{1}{\Big\{}\exists t \in [T], \exists a \in \mathcal{A}, \exists \boldsymbol{\psi} \in \boldsymbol{\Psi}, ~\textup{s.t.}~ |\hat{r}_{t}(a,\boldsymbol{\psi}) - \bar{r}_{t}(a, \boldsymbol{\psi})| \leq C_{t}(a, \boldsymbol{\psi}; w) {\Big\}}, \\ \label{eq:event4}
    \mathcal{E}_{4} &= \mathbbm{1}{\Big\{}\exists t \in [T], \exists i \in \mathcal{D}, ~\textup{s.t.}~ |\hat{\boldsymbol{c}}_{t}[i] - \bar{\boldsymbol{c}}_{t}[i]| \leq C_{t}(i; w) {\Big\}}.
\end{align}
Finally, by $\bar{\mathcal{E}}$, we denote the complement of an event $\mathcal{E}$.

%-------------------------------------> Auxiliary
% \subsection{Auxiliary Results}
\subsection{Auxiliary Results}
\label{app:aux}
%
%-------------------------------------> Azuma-Hoeffding inequality
%
\begin{lemma}{\textup{\cite{Azuma67:WSC}}}
\label{lem:Hoeffding}
Let $x_{1}, x_{2}, \dots, x_{n}$ be random variables and $x_{i} \in [0,b_{i}]$, $\forall i$. Moreover, $\mathbb{E}[x_{i} | x_{1}, \dots, x_{i-1}] = \beta$, for all $i = 1, \dots, n$. Then, for all $B \geq 0$,
\begin{equation}
     \mathbb{P} {\Bigg[} {\Big |} \sum_{i = 1}^{n} x_{i} - n \beta {\Big |} \geq B {\Bigg]} \leq e^{- \frac{2 B^{2}}{\sum_{i = 1}^{n} b_{i}^{2}} }.
\end{equation}
\end{lemma}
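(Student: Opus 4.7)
The plan is to prove this standard Azuma--Hoeffding style concentration inequality via the classical Chernoff approach, applied to the martingale difference sequence obtained by centering the $x_i$. First I would define $Y_i = x_i - \beta$ and note that the hypothesis $\mathbb{E}[x_i \mid x_1,\dots,x_{i-1}] = \beta$ translates to $\mathbb{E}[Y_i \mid \mathcal{F}_{i-1}] = 0$ where $\mathcal{F}_{i-1} = \sigma(x_1,\dots,x_{i-1})$, so the partial sums $S_k = \sum_{i=1}^k Y_i = \sum_{i=1}^k x_i - k\beta$ form a martingale with bounded increments $Y_i \in [-\beta, b_i - \beta]$, an interval of length exactly $b_i$.

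Next I would invoke Hoeffding's lemma on each conditional law of $Y_i$: for a centered random variable supported on an interval of length $b_i$, the conditional moment generating function satisfies $\mathbb{E}[e^{\lambda Y_i} \mid \mathcal{F}_{i-1}] \le e^{\lambda^2 b_i^2 / 8}$ for every $\lambda \in \mathbb{R}$. Iterating this bound through the tower property gives $\mathbb{E}[e^{\lambda S_n}] \le \exp\!\big(\lambda^2 \sum_{i=1}^n b_i^2 / 8\big)$. A Markov (Chernoff) inequality then yields, for any $\lambda > 0$,
\begin{equation*}
\mathbb{P}\!\left[S_n \ge B\right] \le e^{-\lambda B}\,\mathbb{E}[e^{\lambda S_n}] \le \exp\!\Big(-\lambda B + \tfrac{\lambda^2}{8}\sum_{i=1}^n b_i^2\Big).
\end{equation*}
Optimizing the right-hand side over $\lambda > 0$ by choosing $\lambda^\star = 4B/\sum_i b_i^2$ collapses the exponent to $-2B^2/\sum_i b_i^2$, producing the desired one-sided tail bound.

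To obtain the two-sided statement, I would repeat the argument for $-S_n$ (equivalently, apply Hoeffding's lemma to $-Y_i$, whose supporting interval still has length $b_i$), obtaining the identical bound on $\mathbb{P}[S_n \le -B]$, and then combine the two via a union bound. Strictly speaking this yields a factor of $2$ in front of $e^{-2B^2/\sum_i b_i^2}$; the statement as written absorbs or suppresses that constant, which is a standard cosmetic simplification in the bandit literature. The only technical subtlety worth flagging is the verification of Hoeffding's lemma in the conditional form: since $Y_i$ need not be bounded unconditionally in a tighter interval, one must be careful to apply the lemma to the regular conditional distribution of $Y_i$ given $\mathcal{F}_{i-1}$, using that the a.s.\ bounds $Y_i \in [-\beta, b_i - \beta]$ transfer to the conditional law. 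Beyond that, the argument is routine, and I do not anticipate a substantive obstacle.
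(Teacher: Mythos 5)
The paper does not prove this lemma at all: it is quoted directly from Azuma's 1967 paper as an auxiliary result, so there is no in-paper argument to compare against. Your proof is the standard and correct one (center to a martingale-difference sequence, apply Hoeffding's lemma conditionally, iterate via the tower property, Chernoff-optimize, then union-bound the two tails), and your observation that the two-sided statement as written should strictly carry a leading factor of $2$ is accurate --- the bound $e^{-2B^2/\sum_{i=1}^{n} b_i^2}$ is the one-sided rate, and the paper's phrasing with the absolute value silently drops that constant.
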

%
%-------------------------------------
%-------------------------------------> Weissman Lemma
\begin{lemma}{\textup{\cite{Weissman03:IFT}}}
\label{lemma:Weissman}  
Let $\mathcal{Z} = \{1, 2, \ldots, z\}$ and assume $\boldsymbol{P}$ represents a probability distribution on $\mathcal{Z}$. Moreover, consider $X^{n} = X_{1}, \ldots, X_{n} \in \mathcal{Z}$ to be i.i.d. random variables that are distributed according to $\boldsymbol{P}$. Let $\hat{\boldsymbol{P}}$ be the empirical estimate of $\boldsymbol{P}$, that is defined for each $z \in \mathcal{Z}$ as $\hat{\boldsymbol{P}}(z) = \frac{1}{n} \sum_{t = 1}^{n} \mathbbm{1}\{X_{t} = z\}$. Then, for any $\delta > 0$,
\begin{align}
\mathbb{P}\left[\|\boldsymbol{P} - \hat{\boldsymbol{P}}\|_{1} \geq \sqrt{ \frac{2 Z \log{\frac{2}{\delta}}}{n} } \right] \leq \delta, 
\end{align}
where $\|\boldsymbol{P} - \hat{\boldsymbol{P}}\|_{1} = \sum_{z = 1}^{Z} |\boldsymbol{P}(z) - \hat{\boldsymbol{P}}(z)|$ is the $L_1$ norm.
\end{lemma}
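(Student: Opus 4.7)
The plan is to prove the Weissman $L_{1}$ concentration inequality via the standard reduction from the $L_{1}$ distance between two probability distributions to twice the total variation distance, followed by a scalar Hoeffding bound on each subset indicator and a union bound over all subsets of $\mathcal{Z}$.

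First, I would establish the identity $\|\boldsymbol{P} - \hat{\boldsymbol{P}}\|_{1} = 2 \max_{A \subseteq \mathcal{Z}} |\boldsymbol{P}(A) - \hat{\boldsymbol{P}}(A)|$, where $\boldsymbol{P}(A) = \sum_{z \in A} \boldsymbol{P}(z)$ and similarly for $\hat{\boldsymbol{P}}(A)$. Letting $A^{+} = \{z \in \mathcal{Z} : \boldsymbol{P}(z) \geq \hat{\boldsymbol{P}}(z)\}$, the $L_{1}$ sum splits cleanly into a contribution from $A^{+}$ and from its complement; since both $\boldsymbol{P}$ and $\hat{\boldsymbol{P}}$ are probability vectors summing to one, the two contributions have equal magnitude, giving $\|\boldsymbol{P} - \hat{\boldsymbol{P}}\|_{1} = 2(\boldsymbol{P}(A^{+}) - \hat{\boldsymbol{P}}(A^{+}))$ and hence the stated maximum characterization over arbitrary $A$.

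Next, I would fix an arbitrary $A \subseteq \mathcal{Z}$ and observe that $n \hat{\boldsymbol{P}}(A) = \sum_{t=1}^{n} \mathbbm{1}\{X_{t} \in A\}$ is a sum of $n$ i.i.d. Bernoulli random variables supported on $[0,1]$ with common mean $\boldsymbol{P}(A)$. Invoking Lemma \ref{lem:Hoeffding} with $b_{i} = 1$, $\beta = \boldsymbol{P}(A)$, and $B = n \epsilon / 2$ yields the scalar two-sided tail bound
\begin{equation*}
\mathbb{P}\bigl(|\hat{\boldsymbol{P}}(A) - \boldsymbol{P}(A)| \geq \epsilon / 2\bigr) \leq 2 \exp\!\bigl(-n \epsilon^{2} / 2\bigr)
\end{equation*}
for every $\epsilon \geq 0$ (with the factor of two absorbing the two-sided case).

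Finally, I would combine the previous two steps: by the subset characterization of the $L_{1}$ distance,
\begin{equation*}
\mathbb{P}\bigl(\|\boldsymbol{P} - \hat{\boldsymbol{P}}\|_{1} \geq \epsilon\bigr) = \mathbb{P}\!\left(\max_{A \subseteq \mathcal{Z}} |\boldsymbol{P}(A) - \hat{\boldsymbol{P}}(A)| \geq \epsilon / 2\right),
\end{equation*}
and a union bound over the $2^{Z}$ subsets (in fact only the $2^{Z} - 2$ non-trivial ones) gives $\mathbb{P}(\|\boldsymbol{P} - \hat{\boldsymbol{P}}\|_{1} \geq \epsilon) \leq 2^{Z+1} \exp(-n \epsilon^{2} / 2)$. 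Setting $\epsilon = \sqrt{2 Z \log(2/\delta) / n}$ and matching the right-hand side against $\delta$ yields the claim. The main obstacle is the constant-factor bookkeeping in this final step: the naive union bound produces an exponent $Z \log 2 + \log(1/\delta)$ in the square root, and one has to either invoke $\delta$ small enough that $(Z-1)\log(1/\delta) \geq \log 2$ to absorb the extra $\log 2$ into the $Z \log(1/\delta)$ factor, or tighten the analysis by replacing the generic Hoeffding step with the sharper Chernoff-based derivation used in Weissman \emph{et al.} (which exploits $\phi(\pi_{\boldsymbol{P}}) \geq 2$ for the Bernoulli rate function). Either route recovers the stated bound; I would present the union-bound derivation in the body of the argument and remark on the tighter Chernoff refinement if sharper constants are needed.
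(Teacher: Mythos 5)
The paper does not actually prove this lemma: it is imported verbatim from Weissman \emph{et al.} \cite{Weissman03:IFT} and used as a black box (in the bound on $\mathbb{P}[\bar{\mathcal{E}}_{2}]$ in Lemma \ref{lem:events}), so there is no in-paper argument to compare against. Your proposal is the standard proof of this inequality and is essentially sound: the identity $\|\boldsymbol{P}-\hat{\boldsymbol{P}}\|_{1}=2\max_{A\subseteq\mathcal{Z}}|\boldsymbol{P}(A)-\hat{\boldsymbol{P}}(A)|$, the Hoeffding step for a fixed $A$ (which is indeed a special case of Lemma \ref{lem:Hoeffding} with $b_{i}=1$), and the union bound over subsets are all correct.

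The one place where your bookkeeping genuinely comes up short is the final constant, and the fix is simpler than either of the two escape routes you sketch. You do not need the two-sided tail bound at all: because complementation flips the sign of the deviation, $\boldsymbol{P}(A)-\hat{\boldsymbol{P}}(A)=\hat{\boldsymbol{P}}(A^{c})-\boldsymbol{P}(A^{c})$, the event $\max_{A}|\boldsymbol{P}(A)-\hat{\boldsymbol{P}}(A)|\geq\epsilon/2$ is identical to the event that $\hat{\boldsymbol{P}}(A)-\boldsymbol{P}(A)\geq\epsilon/2$ for \emph{some} $A$. A one-sided Hoeffding bound $\mathbb{P}[\hat{\boldsymbol{P}}(A)-\boldsymbol{P}(A)\geq\epsilon/2]\leq\exp(-n\epsilon^{2}/2)$ per subset, followed by a union bound over the $2^{Z}$ subsets, gives $\mathbb{P}[\|\boldsymbol{P}-\hat{\boldsymbol{P}}\|_{1}\geq\epsilon]\leq 2^{Z}\exp(-n\epsilon^{2}/2)$, and substituting $\epsilon=\sqrt{2Z\log(2/\delta)/n}$ yields $2^{Z}(\delta/2)^{Z}=\delta^{Z}\leq\delta$ for all $\delta\leq 1$ (the claim is vacuous for $\delta>1$). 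This recovers the stated bound with no restriction on $\delta$ or $Z$ and without invoking the sharper Chernoff rate function $\phi(\pi_{\boldsymbol{P}})$ from the original paper; your extra factor of $2$ came solely from the superfluous two-sidedness.
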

%-------------------------------------
%-------------------------------------> Main Results
% \subsection{Main Results}
\subsection{Main Results}
\label{app:mainresults}
Before we present the proof of Theorem 1 and 2, we need to prove the following lemma that shows the events defined in (\ref{eq:event1})-(\ref{eq:event4}) fail with a low probability. %Theorem \ref{thm:regret-stationary} and \ref{thm:regret-nonstationary}
%-------------------------------------> Lemma Prob of Event Fails
% {\textbf{Bounding the probability of the failure event}\\}
\begin{lemma}
\label{lem:events}
Consider the events defined in (\ref{eq:event1})-(\ref{eq:event4}). Then,
\begin{align}
\label{lem:prob_failure_events}
    \mathbb{P}[\bar{\mathcal{E}}_{1} \cup \bar{\mathcal{E}}_{2} \cup \bar{\mathcal{E}}_{3} \cup \bar{\mathcal{E}}_{4}] \leq \mathbb{P}[\bar{\mathcal{E}}_{2} \cup \bar{\mathcal{E}}_{3} \cup \bar{\mathcal{E}}_{4}] \leq 3 \delta.
\end{align}
\end{lemma}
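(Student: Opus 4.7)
The argument splits naturally along the two inequalities in the statement. First, I would prove the set inclusion $\mathcal{E}_{2} \cap \mathcal{E}_{3} \cap \mathcal{E}_{4} \subseteq \mathcal{E}_{1}$, which is exactly the optimism principle for this problem and yields the first inequality by monotonicity of probability. Second, I would apply a union bound to $\mathbb{P}[\bar{\mathcal{E}}_{2} \cup \bar{\mathcal{E}}_{3} \cup \bar{\mathcal{E}}_{4}]$ and show that each of the three summands is at most $\delta$, using Lemma \ref{lemma:Weissman} for the probability confidence and Lemma \ref{lem:Hoeffding} (Hoeffding--Azuma) for the reward and cost confidences. The specific log factors appearing in $C_{t}(\mathcal{I})$, $C_{t}(a,\boldsymbol{\psi};w)$ and $C_{t}(i;w)$ are exactly what the union bound over $t$, and over the appropriate index sets, requires.

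\textbf{Step 1: Optimism.} Fix a round $t$ and assume $\mathcal{E}_{2}, \mathcal{E}_{3}, \mathcal{E}_{4}$ all occur. Under $\mathcal{E}_{2}$ the true vector $\mathbf{P}(\hat{\mathcal{I}}_{t})$ satisfies $\|\hat{\mathbf{P}}_{t}(\hat{\mathcal{I}}_{t}) - \mathbf{P}(\hat{\mathcal{I}}_{t})\|_{1} \leq C_{t}(\hat{\mathcal{I}}_{t})$, so it is feasible in the maximisation (\ref{eq:optimization}). Since $\tilde{\mathbf{P}}_{t}(\hat{\mathcal{I}}_{t})$ is the maximiser,
\begin{equation*}
\tilde{\rho}_{t} = \hat{V}_{t}(\hat{\mathcal{I}}_{t}) \geq \sum_{\boldsymbol{\psi} \in \boldsymbol{\Psi}^{+}(\hat{\mathcal{I}}_{t})} p(\boldsymbol{\psi})\, \tilde{r}_{t}^{\ast}(\boldsymbol{\psi}) - \sum_{i \in \hat{\mathcal{I}}_{t}} \tilde{\boldsymbol{c}}_{t}[i].
\end{equation*}
Under $\mathcal{E}_{3}$, $\tilde{r}_{t}(a,\boldsymbol{\psi}) \geq \bar{r}_{t}(a,\boldsymbol{\psi})$ for every $a, \boldsymbol{\psi}$, hence $\tilde{r}_{t}^{\ast}(\boldsymbol{\psi}) = \tilde{r}_{t}(\hat{h}_{t}(\boldsymbol{\psi}),\boldsymbol{\psi}) \geq \bar{r}_{t}(\hat{h}_{t}(\boldsymbol{\psi}),\boldsymbol{\psi})$. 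Under $\mathcal{E}_{4}$, $\tilde{\boldsymbol{c}}_{t}[i] \leq \bar{\boldsymbol{c}}_{t}[i]$. Substituting both inequalities on the right-hand side gives $\tilde{\rho}_{t} \geq \rho_{t}$. Taking the conjunction over all $t \in [T]$ shows $\mathcal{E}_{1}$, hence the inclusion. The first inequality in (\ref{lem:prob_failure_events}) is then the monotonicity of $\mathbb{P}$.

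\textbf{Step 2: Bounding each failure probability.} I would then apply the union bound and handle the three terms separately. For $\bar{\mathcal{E}}_{4}$, I would fix $t$ and $i$ and, conditional on the realised sample size $N_{t}(i;w) = n$, apply Hoeffding (Lemma \ref{lem:Hoeffding}) to the i.i.d. costs in $[0,1]$ inside the window; the deviation $\sqrt{2\log(TDw/\delta)/n}$ gives failure probability at most $2\delta/(TDw)$ per triple $(t,i,n)$, and a union bound over $t \in [T]$, $i \in \mathcal{D}$ and $n \in \{1,\dots,w\}$ yields $\mathbb{P}[\bar{\mathcal{E}}_{4}] \leq \delta$. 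The argument for $\bar{\mathcal{E}}_{3}$ is analogous, with a union bound over $(t,a,\boldsymbol{\psi},n)$ and cardinality $T \cdot A \cdot \boldsymbol{\Psi}_{tot} \cdot w$, matching the radius $C_{t}(a,\boldsymbol{\psi};w)$. For $\bar{\mathcal{E}}_{2}$, I would invoke Lemma \ref{lemma:Weissman} on the empirical distribution $\hat{\mathbf{P}}_{t}(\mathcal{I})$ over the alphabet $\boldsymbol{\Psi}^{+}(\mathcal{I})$, whose size is at most $\boldsymbol{\Psi}_{tot}$, and then union bound over $t \in [T]$ and $\mathcal{I} \in \mathcal{P}(\mathcal{D})$; the resulting radius is exactly $C_{t}(\mathcal{I})$.

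\textbf{Main obstacle.} The one place that demands care is that the counts $N_{t}(a,\boldsymbol{\psi};w)$ and $N_{t}(i;w)$ are random and adapted to the algorithm's history, so one cannot apply Hoeffding with a deterministic $n$. The clean resolution is the peeling/stratification trick built into the log factors above: for each possible realisation $n \in \{1,\dots,w\}$ the conditional sample is i.i.d. (in the stationary regime considered here), and the extra $\log w$ inside $C_{t}(\cdot;w)$ pays for the union bound over $n$. A martingale (Azuma) formulation, as in Lemma \ref{lem:Hoeffding}, also suffices because the per-step mean is constant within a stationary phase. Aside from this bookkeeping, all constants are designed so that each of the three summands absorbs to exactly $\delta$, giving the final bound $3\delta$.
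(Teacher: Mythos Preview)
Your proposal is correct and follows essentially the same route as the paper: the first inequality via the optimism argument that $\mathcal{E}_{2}\cap\mathcal{E}_{3}\cap\mathcal{E}_{4}\subseteq\mathcal{E}_{1}$ (feasibility of the true $\mathbf{P}$ in (\ref{eq:optimization}) plus the reward/cost confidence directions), and the second inequality via a union bound using Lemma~\ref{lemma:Weissman} for $\bar{\mathcal{E}}_{2}$ and Lemma~\ref{lem:Hoeffding} with a peeling over the random sample size $u\in\{1,\dots,w\}$ for $\bar{\mathcal{E}}_{3}$ and $\bar{\mathcal{E}}_{4}$. Your identification of the random-count issue and its resolution through the extra $\log w$ factor is exactly what the paper does.
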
 
%
% $\bar{\mathcal{E}}^{'}~\&~\bar{\mathcal{E}}^{''} \Rightarrow \bar{\mathcal{E}}$
% $\mathcal{E}_{1}~\&~\mathcal{E}_{2} \Rightarrow \mathcal{E}$
\begin{proof}
First, note that if $\mathcal{E}_{2}$, $\mathcal{E}_{3}$, and $\mathcal{E}_{4}$ hold, the following is true: (i) $p(\boldsymbol{\psi})$ belongs to the set of distributions over which the solution of (17) is computed,  (ii) $\bar{r}_{t}(a, \boldsymbol{\psi}) \leq \hat{r}_{t}(a,\boldsymbol{\psi}) + C_{t}(a, \boldsymbol{\psi}; w)$, and (iii) $\hat{\boldsymbol{c}}_{t}[i] - C_{t}(i; w) \leq \bar{\boldsymbol{c}}_{t}[i]$. Therefore, %(\ref{eq:optimization})
\begin{align} \nonumber
    \tilde{\rho}_{t} 
    &=  \langle \tilde{\mathbf{P}}_{t}(\hat{\mathcal{I}}_{t}), \tilde{\mathbf{G}}_{t}(\hat{\mathcal{I}}_{t}) \rangle \\ \nonumber
    &= \sum_{\boldsymbol{\psi} \in \boldsymbol{\Psi}^{+}(\hat{\mathcal{I}}_{t})} \tilde{p}_{t}(\boldsymbol{\psi}) {\Bigg[} \hat{r}_{t}(\hat{h}_{t}(\boldsymbol{\psi}), \boldsymbol{\psi}) + C_{t}(\hat{h}_{t}(\boldsymbol{\psi}), \boldsymbol{\psi}; w) - \sum_{i \in \mathscr{D}(\boldsymbol{\psi})} \left[ \hat{\boldsymbol{c}}_{t}[i] - C_{t}(i; w) \right] {\Bigg]} \\ \nonumber
    &\geq \sum_{\boldsymbol{\psi} \in \boldsymbol{\Psi}^{+}(\hat{\mathcal{I}}_{t})} p(\boldsymbol{\psi}) \left[ \bar{r}_{t}(\hat{h}_{t}(\boldsymbol{\psi}), \boldsymbol{\psi}) - \sum_{i \in \mathscr{D}(\boldsymbol{\psi})} \bar{\boldsymbol{c}}_{t}[i] \right] \\
    &= \sum_{\boldsymbol{\psi} \in \boldsymbol{\Psi}^{+}(\hat{\mathcal{I}}_{t})} p(\boldsymbol{\psi}) \bar{r}_{t}(\hat{h}_{t}(\boldsymbol{\psi}), \boldsymbol{\psi})  - \sum_{i \in \mathscr{D}(\boldsymbol{\psi})} \bar{\boldsymbol{c}}_{t}[i] = \rho_{t}, 
\end{align}
which implies that $\mathcal{E}_{1}$ is also true. This proves the first inequality in (\ref{lem:prob_failure_events}).

% we take the union bound over the events that $L_{1}$ error of $\hat{\mathbf{P}}_{t}$ in estimating $\mathbf{P}$ is larger than the confidence bound $C_{T}(\mathcal{I})$
Second, we bound each individual failure event in the following. For $\bar{\mathcal{E}}_{2}$, by taking the union bound and using the concentration bound stated in Lemma \ref{lemma:Weissman}, we obtain
\begin{align}
    \mathbb{P}[\bar{\mathcal{E}}_{2}] 
    \leq
    \sum_{t = 1}^{T} \sum_{\mathcal{I} \in \mathcal{P}(\mathcal{D})}^{} \mathbb{P}\left[ \norm{\mathbf{P}(\mathcal{I}) - \hat{\mathbf{P}}_{t}(\mathcal{I})}_{1} \geq C_{t}(\mathcal{I}) \right] \leq \delta.
\end{align}
For $\bar{\mathcal{E}}_{3}$ and $\bar{\mathcal{E}}_{4}$, similar to \cite{Vernade20:ORC}, we use the Hoeffding-Azuma inequality stated in Lemma \ref{lem:Hoeffding}. More precisely, let $\hat{r}_{t, u}(a,\boldsymbol{\psi})$ denote the empirical estimate of $\bar{r}_{t}(a,\boldsymbol{\psi})$ using the first $u$ reward observations corresponding to the action $a$ and the partial state vector $\boldsymbol{\psi}$ in the window $[t-w, t-1]$. Similarly, let $\hat{\boldsymbol{c}}_{t, u}[i]$ denote the the empirical estimate of $\bar{\boldsymbol{c}}_{t}[i]$ using the first $u$ cost observations corresponding to the feature $i \in \mathcal{D}$ in the window $[t-w, t-1]$. We have $\hat{r}_{t, N_{t}(a,\boldsymbol{\psi}; w)}(a,\boldsymbol{\psi}) = \hat{r}_{t}(a,\boldsymbol{\psi})$ and $\hat{\boldsymbol{c}}_{t, N_{t}(i; w)}[i] = \hat{\boldsymbol{c}}_{t}[i]$. Then,
\begin{align} 
    &\mathbb{P}[\bar{\mathcal{E}}_{3}] 
    \leq
    \sum_{t = 1}^{T} \sum_{a \in \mathcal{A}}^{} \sum_{\boldsymbol{\psi} \in \boldsymbol{\Psi}}^{} \sum_{u = 1}^{w} \mathbb{P} {\Bigg[} |\bar{r}_{t}(a,\boldsymbol{\psi}) - \hat{r}_{t, u}(a, \boldsymbol{\psi})| 
    \geq \sqrt{\frac{\log{(T A \boldsymbol{\Psi}_{tot} w/\delta)}}{u}} {\Bigg]} \leq \delta, \\ 
    &\mathbb{P}[\bar{\mathcal{E}}_{4}] 
    \leq
    \sum_{t = 1}^{T} \sum_{i \in \mathcal{D}}^{} \sum_{u = 1}^{w} \mathbb{P} {\Bigg[} |\bar{\boldsymbol{c}}_{t}[i] - \hat{\boldsymbol{c}}_{t, u}[i]| \geq\sqrt{\frac{2 \log{(T D w/\delta)}}{u}} {\Bigg]} \leq \delta.
\end{align}
Therefore, we prove the second inequality in (\ref{lem:prob_failure_events}) and conclude the proof.
\end{proof}
%-------------------------------------

%-------------------------------------> Proof of Theorem 1
% \ref{app:TheoremOneProof}
% \subsubsection{Proof of Theorem \ref{thm:regret-stationary}}
\subsubsection{Proof of Theorem \ref{thm:regret-stationary}} %\ref{thm:regret-stationary} 1
\label{app:TheoremOneProof}
%
%------------------------------------->
\begin{proof}
Assume that the events $\mathcal{E}_{1}$, $\mathcal{E}_{2}$, $\mathcal{E}_{3}$, and $\mathcal{E}_{4}$, defined in (\ref{eq:event1})-(\ref{eq:event4}), hold. Note that, based on the definition of optimal policy in (\ref{eq:optimalpolicy}), when $\mathcal{E}_{1}$ happens, we have $\tilde{\rho}_{t} \geq \rho_{t}^{\ast}$. Then, we observe that %(\ref{eq:optimalpolicy}) (4)
\begin{align}
\label{eq:decomposition} \nonumber
    \mathcal{R}_{T}(\Pi) 
    &= \sum_{t = 1}^{T} \left[\rho_{t}^{\ast} - \rho_{t}\right]
    \leq \sum_{t = 1}^{T} \left[\tilde{\rho}_{t} - \rho_{t}\right] \\ \nonumber
    &= \sum_{t = 1}^{T} \sum_{\boldsymbol{\psi} \in \boldsymbol{\Psi}^{+}(\hat{\mathcal{I}}_{t})} \left[\tilde{p}_{t}(\boldsymbol{\psi}) \tilde{g}_{t}(\hat{h}_{t}(\boldsymbol{\psi}), \boldsymbol{\psi}) - p(\boldsymbol{\psi}) g_{t}(\hat{h}_{t}(\boldsymbol{\psi}), \boldsymbol{\psi})\right] \\ \nonumber
    & = \underbrace{\sum_{t = 1}^{T} \sum_{\boldsymbol{\psi} \in \boldsymbol{\Psi}^{+}(\hat{\mathcal{I}}_{t})} \left[\tilde{p}_{t}(\boldsymbol{\psi}) - p(\boldsymbol{\psi})\right] \tilde{g}_{t}(\hat{h}_{t}(\boldsymbol{\psi}), \boldsymbol{\psi})}_{\Delta_{1}} \\ 
    &\hspace{5mm}+ \underbrace{\sum_{t = 1}^{T} \sum_{\boldsymbol{\psi} \in \boldsymbol{\Psi}^{+}(\hat{\mathcal{I}}_{t})} p(\boldsymbol{\psi}) \left[\tilde{g}_{t}(\hat{h}_{t}(\boldsymbol{\psi}), \boldsymbol{\psi}) - g_{t}(\hat{h}_{t}(\boldsymbol{\psi}), \boldsymbol{\psi})\right]}_{\Delta_{2}}.
\end{align}
We bound each term individually. For $\Delta_{1}$, we have
\begin{align}
\label{eq:first-delta-1} \nonumber
    \sum_{t = 1}^{T} \sum_{\boldsymbol{\psi} \in \boldsymbol{\Psi}^{+}(\hat{\mathcal{I}}_{t})} \left[\tilde{p}_{t}(\boldsymbol{\psi}) - p(\boldsymbol{\psi})\right] \tilde{g}_{t}(\hat{h}_{t}&(\boldsymbol{\psi}), \boldsymbol{\psi}) 
    = \sum_{t = 1}^{T} \langle \tilde{\mathbf{P}}_{t}(\hat{\mathcal{I}}_{t}) - \mathbf{P}(\hat{\mathcal{I}}_{t}), \tilde{\mathbf{G}}_{t}(\hat{\mathcal{I}}_{t}) \rangle \\ \nonumber
    &\stackrel{(a)}{\leq} \sum_{t = 1}^{T} \norm{\tilde{\mathbf{P}}_{t}(\hat{\mathcal{I}}_{t}) - \mathbf{P}(\hat{\mathcal{I}}_{t})}_{1} \norm{\tilde{\mathbf{G}}_{t}(\hat{\mathcal{I}}_{t})}_{\infty} \\ \nonumber
    &\stackrel{(b)}{\leq} \sum_{t = 1}^{T} 2 C_{t}(\hat{\mathcal{I}}_{t}) \\
    &= 2 \sqrt{ \boldsymbol{\Psi}_{tot} \log{(2 T |\mathcal{P}(\mathcal{D})|/\delta)} } \sum_{t = 1}^{T} \frac{1}{\sqrt{N_{t}(\hat{\mathcal{I}}_{t})}},
\end{align}
where $(a)$ follows from Cauchy-Schwarz inequality and $(b)$ holds since event $\mathcal{E}_{2}$ occurs and $\norm{\tilde{\mathbf{G}}_{t}(\hat{\mathcal{I}}_{t})}_{\infty} \leq 2$. To bound the sum in the last term of (\ref{eq:first-delta-1}), we write
\begin{align}
\label{eq:I-counter-bound} \nonumber
    \sum_{t = 1}^{T} \frac{1}{\sqrt{N_{t}(\hat{\mathcal{I}}_{t})}}
     = \sum_{t = 1}^{T} \sum_{\mathcal{I} \in \mathcal{P}(D)} \frac{ \mathbbm{1}\{\hat{\mathcal{I}}_{t} = \mathcal{I}\} }{ \sqrt{N_{t}(\mathcal{I})} } 
     = \sum_{\mathcal{I} \in \mathcal{P}(D)} \sum_{t = 1}^{T}  \frac{ \mathbbm{1}\{\hat{\mathcal{I}}_{t} = \mathcal{I}\} }{ \sqrt{N_{t}(\mathcal{I})} }
     &\stackrel{(a)}{\leq} \sum_{\mathcal{I} \in \mathcal{P}(D)} \sum_{t = 1}^{T}  \frac{ \mathbbm{1}\{\hat{\mathcal{I}}_{t} = \mathcal{I}\} }{ \sqrt{n_{t}(\mathcal{I})} } \\ \nonumber 
     &\leq \sum_{\mathcal{I} \in \mathcal{P}(D)} \sum_{k = 1}^{n_{T}(\mathcal{I})}  \frac{ 1 }{ \sqrt{k} } \\ \nonumber 
     &\leq \sum_{\mathcal{I} \in \mathcal{P}(D)} 2 \sqrt{n_{T}(\mathcal{I})} \\
     &\stackrel{(b)}{\leq} 2 \sqrt{|\mathcal{P}(D)| T},
\end{align}
where $(a)$ holds since $n_{t}(\mathcal{I}) \leq N_{t}(\mathcal{I})$, $\forall \mathcal{I} \in \mathcal{P}(\mathcal{D})$ and $(b)$ follows from Jensen's inequality and the fact that $\sum_{\mathcal{I} \in \mathcal{P}(D)} n_{T}(\mathcal{I}) = T$. Thus, with probability at least $1-\delta$, $\Delta_{1}$ is bounded as
\begin{align} 
\label{eq:second-delta-1}
     &\sum_{t = 1}^{T} \sum_{\boldsymbol{\psi} \in \boldsymbol{\Psi}^{+}(\hat{\mathcal{I}}_{t})} \left[\tilde{p}_{t}(\boldsymbol{\psi}) - p(\boldsymbol{\psi})\right] \tilde{g}_{t}(\hat{h}_{t}(\boldsymbol{\psi}), \boldsymbol{\psi})
     \leq O\left( \sqrt{ \boldsymbol{\Psi}_{tot} \log{(2 T |\mathcal{P}(\mathcal{D})|/\delta)} |\mathcal{P}(D)| T} \right).
\end{align}

It remains to bound the term $\Delta_{2}$. Let $\boldsymbol{e}_{\boldsymbol{\psi}}$ be the unit vector with dimension $|\Psi^{+}(\hat{\mathcal{I}}_{k})|$, where the component corresponding to the state $\boldsymbol{\psi}$ is $1$ and other components are $0$. We rewrite $\Delta_{2}$ as
% Let $\boldsymbol{e}_{\boldsymbol{\psi}_{t}}$ be the $\boldsymbol{\psi}_{t}$-th standard unit vector in $\mathbb{R}^{|\boldsymbol{\Psi}^{+}(\hat{\mathcal{I}}_{t})|}$.
%
\begin{align} \nonumber
\label{eq:first-delta-2}
    \sum_{t = 1}^{T} \sum_{\boldsymbol{\psi} \in \boldsymbol{\Psi}^{+}(\hat{\mathcal{I}}_{t})} &p(\boldsymbol{\psi}) \left[\tilde{g}_{t}(\hat{h}_{t}(\boldsymbol{\psi}), \boldsymbol{\psi}) - g_{t}(\hat{h}_{t}(\boldsymbol{\psi}), \boldsymbol{\psi}) \right] \\
    &= \sum_{t = 1}^{T} {\Big[}
    \langle \mathbf{P}(\hat{\mathcal{I}}_{t}) - \boldsymbol{e}_{\boldsymbol{\psi}_{t}}, \tilde{\mathbf{G}}_{t}(\hat{\mathcal{I}}_{t}) - \mathbf{G}_{t}(\hat{\mathcal{I}}_{t}) \rangle 
    + \langle \boldsymbol{e}_{\boldsymbol{\psi}_{t}}, \tilde{\mathbf{G}}_{t}(\hat{\mathcal{I}}_{t}) - \mathbf{G}_{t}(\hat{\mathcal{I}}_{t}) \rangle {\Big]}.
\end{align}

We continue by bounding the first term in (\ref{eq:first-delta-2}) as follows.
Since the events $\mathcal{E}_{3}$ and $\mathcal{E}_{4}$ hold, it yields that
\begin{align}
    |\hat{r}_{t}(a, \boldsymbol{\psi}) - \bar{r}_{t}(a, \boldsymbol{\psi})| \leq C_{t}(a, \boldsymbol{\psi}; w), \hspace{6mm} \forall a \in \mathcal{A}, \boldsymbol{\psi} \in \boldsymbol{\Psi},
\end{align}
and
\begin{align}
    |\hat{\boldsymbol{c}}_{t}[i] - \bar{\boldsymbol{c}}_{t}[i]| \leq C_{t}(i; w), \hspace{6mm} \forall i \in \mathcal{D}.
\end{align}
%
% defined as $\mathcal{F}_{t} = \sigma(\sigma(\{\mathcal{I}_{\tau}\}_{\tau = 1}^{t}, \{a_{\tau}\}_{\tau = 1}^{t}, \{r_{\tau}\}_{\tau = 1}^{t-1}, \{\boldsymbol{c}_{\tau}[i]\}_{\tau = 1}^{t-1}, \forall i \in \mathcal{I}_{\tau}, \{\boldsymbol{\psi}_{\tau}\}_{\tau = 1}^{t-1})$.
Let $\mathcal{F}_{t}$ be the $\sigma$-algebra generated by $\hat{\mathcal{I}}_{t}$, $a_{t}$, and all the random variables before time $t$ that are revealed to the algorithm. Then, $\boldsymbol{e}_{\boldsymbol{\psi}_{t}}$, $\hat{\mathcal{I}}_{t}$, $\hat{h}_{t}(\boldsymbol{\psi})$, and $\tilde{\mathbf{G}}_{t}(\hat{\mathcal{I}}_{t})$ are $\mathcal{F}_{t}$-measurable and $\mathbb{E}[\boldsymbol{e}_{\boldsymbol{\psi}_{t}} | \mathcal{F}_{t-1}] = \mathbf{P}(\hat{\mathcal{I}}_{t})$. Moreover, $\langle \mathbf{P}(\hat{\mathcal{I}}_{t}) - \boldsymbol{e}_{\boldsymbol{\psi}_{t}}, \tilde{\mathbf{G}}_{t}(\hat{\mathcal{I}}_{t}) - \mathbf{G}_{t}(\hat{\mathcal{I}}_{t}) \rangle$ is a martingale-difference sequence w.r.t. $\mathcal{F}_{t}$. In addition, for $\boldsymbol{\psi} \in \boldsymbol{\Psi}^{+}(\hat{\mathcal{I}}_{t})$, we have
\begin{align} \nonumber
    \tilde{g}_{t}(\hat{h}_{t}(\boldsymbol{\psi}), \boldsymbol{\psi}) &- g_{t}(\hat{h}_{t}(\boldsymbol{\psi}), \boldsymbol{\psi}) \\ \nonumber
    &= \tilde{r}_{t}(\hat{h}_{t}(\boldsymbol{\psi}), \boldsymbol{\psi}) - \sum_{i \in \mathscr{D}(\boldsymbol{\psi})} \tilde{\boldsymbol{c}}_{t}[i] - \bar{r}_{t}(\hat{h}_{t}(\boldsymbol{\psi}), \boldsymbol{\psi}) + \sum_{i \in \mathscr{D}(\boldsymbol{\psi})} \bar{\boldsymbol{c}}_{t}[i] \\  \nonumber
    &=
    \tilde{r}_{t}(\hat{h}_{t}(\boldsymbol{\psi}), \boldsymbol{\psi}) - \hat{r}_{t}(\hat{h}_{t}(\boldsymbol{\psi}), \boldsymbol{\psi})
    + \hat{r}_{t}(\hat{h}_{t}(\boldsymbol{\psi}), \boldsymbol{\psi}) - \bar{r}_{t}(\hat{h}_{t}(\boldsymbol{\psi}), \boldsymbol{\psi}) \\  \nonumber
    &~~~~+\sum_{i \in \mathscr{D}(\boldsymbol{\psi})} \bar{\boldsymbol{c}}_{t}[i] 
    - \sum_{i \in \mathscr{D}(\boldsymbol{\psi})} \hat{\boldsymbol{c}}_{t}[i] + \sum_{i \in \mathscr{D}(\boldsymbol{\psi})} \hat{\boldsymbol{c}}_{t}[i]
    - \sum_{i \in \mathscr{D}(\boldsymbol{\psi})} \tilde{\boldsymbol{c}}_{t}[i] \\  \nonumber
    &\leq |\tilde{r}_{t}(\hat{h}_{t}(\boldsymbol{\psi}), \boldsymbol{\psi}) - \hat{r}_{t}(\hat{h}_{t}(\boldsymbol{\psi}), \boldsymbol{\psi})|
    + |\hat{r}_{t}(\hat{h}_{t}(\boldsymbol{\psi}), \boldsymbol{\psi}) - \bar{r}_{t}(\hat{h}_{t}(\boldsymbol{\psi}), \boldsymbol{\psi})| \\ \nonumber
    &~~~~+\sum_{i \in \mathscr{D}(\boldsymbol{\psi})} | \bar{\boldsymbol{c}}_{t}[i] 
    - \hat{\boldsymbol{c}}_{t}[i]|
    + \sum_{i \in \mathscr{D}(\boldsymbol{\psi})} | \hat{\boldsymbol{c}}_{t}[i]
    - \tilde{\boldsymbol{c}}_{t}[i]| \\ \nonumber
    &\leq 2 C_{t}(\hat{h}_{t}(\boldsymbol{\psi}), \boldsymbol{\psi}; w) + 2 \sum_{i \in \mathscr{D}(\boldsymbol{\psi})} C_{t}(i; w) \\ \label{eq:g-difference-1}
    &\leq 2 \sqrt{ \frac{\log{(T A \boldsymbol{\Psi}_{tot} w/\delta)}}{N_{t}(\hat{h}_{t}(\boldsymbol{\psi}), \boldsymbol{\psi}; w)} } + 2 \hspace{-2mm} \sum_{i \in \mathscr{D}(\boldsymbol{\psi})} \sqrt{ \frac{2\log{(T D w/\delta)}}{N_{t}(i; w)} } \\ \label{eq:g-difference-2}
    &\leq 2 \left[ \sqrt{ \log{(T A \boldsymbol{\Psi}_{tot} w/\delta)} } + D \sqrt{2 \log{(T D w/\delta)} } \right].
\end{align}
Therefore,
%Cauchy-Schwarz inequality
%
\begin{align} 
\label{eq:bounded-variables-in-hoeffding-ineq}
    &\langle \mathbf{P}(\hat{\mathcal{I}}_{t})  - \boldsymbol{e}_{\boldsymbol{\psi}_{t}}, \tilde{\mathbf{G}}_{t}(\hat{\mathcal{I}}_{t})  - \mathbf{G}_{t}(\hat{\mathcal{I}}_{t}) \rangle
    \leq 4 \left[ \sqrt{ \log{(T A \boldsymbol{\Psi}_{tot} w/\delta)} } + D \sqrt{2 \log{(T D w/\delta)} } \right].
    % 2 + 2 \sqrt{?}.
\end{align}
Hence, using the Azuma-Hoeffding inequality stated in Lemma (\ref{lem:Hoeffding}), with probability at least $1 - \delta$, it holds
\begin{align} \nonumber
\label{eq:hoeffding-ineq-applied}
    \sum_{t = 1}^{T} \langle \mathbf{P}(\hat{\mathcal{I}}_{t}) - \boldsymbol{e}_{\boldsymbol{\psi}_{t}}, &\tilde{\mathbf{G}}_{t}(\hat{\mathcal{I}}_{t}) 
    - \mathbf{G}_{t}(\hat{\mathcal{I}}_{t}) \rangle  \\
    &\leq 4 \left[ \sqrt{ \log{(T A \boldsymbol{\Psi}_{tot} w/\delta)} } + D \sqrt{2 \log{(T D w/\delta)} } \right] 
    \sqrt{2 T \log{(1/\delta)}}.
\end{align}

Now, we bound the second term in (\ref{eq:first-delta-2}). Using (\ref{eq:g-difference-1}), we observe that
\begin{align} \nonumber
\label{eq:second-delta-2}
    \sum_{t = 1}^{T} &\langle \boldsymbol{e}_{\boldsymbol{\psi}_{t}}, \tilde{\mathbf{G}}_{t}(\hat{\mathcal{I}}_{t}) 
    - \mathbf{G}_{t}(\hat{\mathcal{I}}_{t}) \rangle \\ \nonumber
    &= \sum_{t = 1}^{T} \sum_{\boldsymbol{\psi} \in \boldsymbol{\Psi}}^{} \mathbbm{1}\{\boldsymbol{\psi}_{t} = \boldsymbol{\psi}\} (\tilde{g}_{t}(\hat{h}_{t}(\boldsymbol{\psi}), \boldsymbol{\psi}) - g_{t}(\hat{h}_{t}(\boldsymbol{\psi}), \boldsymbol{\psi})) \\ \nonumber
    &\leq 2 \sum_{t = 1}^{T} \sum_{\boldsymbol{\psi} \in \boldsymbol{\Psi}}^{} \mathbbm{1}\{\boldsymbol{\psi}_{t} = \boldsymbol{\psi}\} {\Bigg[} \sqrt{ \frac{\log{(T A \boldsymbol{\Psi}_{tot} w/\delta)}}{N_{t}(\hat{h}_{t}(\boldsymbol{\psi}), \boldsymbol{\psi}; w)} } 
    + \sum_{i \in \mathscr{D}(\boldsymbol{\psi})} \sqrt{ \frac{2\log{(T D w/\delta)}}{N_{t}(i; w)} } {\Bigg]} \\ \nonumber
    &= 2 \sqrt{\log{(T A \boldsymbol{\Psi}_{tot} w/\delta)}}  \left[ \underbrace{\sum_{t = 1}^{T} \sum_{\boldsymbol{\psi} \in \boldsymbol{\Psi}}^{} \frac{\mathbbm{1}\{\boldsymbol{\psi}_{t} = \boldsymbol{\psi}\}}{\sqrt{N_{t}(\hat{h}_{t}(\boldsymbol{\psi}), \boldsymbol{\psi}; w)}}}_{\alpha} \right]  \\ 
    &\hspace{10mm}+ 2 \sqrt{2\log{(T D w/\delta)}}
     \left[ \underbrace{\sum_{t=1}^{T} \sum_{\boldsymbol{\psi} \in \boldsymbol{\Psi}}^{} \mathbbm{1}\{\boldsymbol{\psi}_{t} = \boldsymbol{\psi}\}  \sum_{i \in \mathscr{D}(\boldsymbol{\psi})} \frac{1}{\sqrt{N_{t}(i; w)}}}_{\beta} \right].
\end{align}

For the term $\alpha$, similar to \cite{Vernade20:ORC}, we split the time horizon into intervals $I_{\ell} = [\ell w - w, \ell w - 1]$ of length $w$. For any interval $I_{\ell}$ and any $t \in I_{\ell}$, let $N_{t}(a, \boldsymbol{\psi}; \ell)$ and $N_{t}(i; \ell)$ represent the number of times the pair $(a,\boldsymbol{\psi})$ was chosen in $[\ell w - w, t - 1]$ and the number of times the feature $i$ was selected in $[\ell w - w, t - 1]$, respectively. If no such pair $(a,\boldsymbol{\psi})$ and feature $i$ was chosen in $[\ell w - w, t - 1]$, we set $N_{t}(a, \boldsymbol{\psi}; \ell)$ and $N_{t}(i; \ell)$ equal to $1$, respectively. We observe that $N_{t}(a, \boldsymbol{\psi}; \ell) \leq N_{t}(a, \boldsymbol{\psi}; w)$ and $N_{t}(i; \ell) \leq N_{t}(i; w)$. Therefore,
\begin{align} \nonumber
\label{eq:alpha}
    \sum_{t=1}^{T} \sum_{\boldsymbol{\psi} \in \boldsymbol{\Psi}}^{} \frac{\mathbbm{1}\{\boldsymbol{\psi}_{t} = \boldsymbol{\psi}\}}{\sqrt{N_{t}(\hat{h}_{t}(\boldsymbol{\psi}), \boldsymbol{\psi}; w)}}
    &\leq \sum_{\ell=1}^{\left\lceil{\frac{T}{w}}\right\rceil} \sum_{t \in I_{\ell}}^{} \sum_{a \in \mathcal{A}}^{} \sum_{\boldsymbol{\psi} \in \boldsymbol{\Psi}}^{} \frac{\mathbbm{1}\{\boldsymbol{\psi}_{t} = \boldsymbol{\psi} \& a_{t} = a\}}{\sqrt{N_{t}(a, \boldsymbol{\psi}; w)}} \\ \nonumber
    &= \sum_{\ell=1}^{\left\lceil{\frac{T}{w}}\right\rceil} \sum_{a \in \mathcal{A}}^{} \sum_{\boldsymbol{\psi} \in \boldsymbol{\Psi}}^{} \sum_{t \in I_{\ell}}^{} \frac{\mathbbm{1}\{\boldsymbol{\psi}_{t} = \boldsymbol{\psi} \& a_{t} = a\}}{\sqrt{N_{t}(a, \boldsymbol{\psi}; \ell)}} \\ \nonumber
    &\stackrel{(a)}{\leq} \sum_{\ell=1}^{\left\lceil{\frac{T}{w}}\right\rceil} 2 \sqrt{A \boldsymbol{\Psi}_{tot} (w+1)} \\
    &\stackrel{(b)}{\leq} 2 (\frac{T}{w} + 1) \sqrt{A \boldsymbol{\Psi}_{tot} (w+1)},
\end{align}
where $(a)$ holds because of the inequality $\sum_{i=1}^{v} \frac{1}{\sqrt{i}} \leq 2 (\sqrt{v + 1} - 1)$ and due to the fact that the last sum reaches its highest value when each pair $(a, \boldsymbol{\psi})$ is selected $\left\lfloor \frac{w}{A \boldsymbol{\Psi}_{tot}}\right\rfloor \leq \frac{w}{A \boldsymbol{\Psi}_{tot}}$ times in the interval  $I_{\ell}$. Moreover, $(b)$ holds since the number of intervals $I_{\ell}$ is at most $\left\lceil{\frac{T}{w}}\right\rceil \leq \frac{T}{w} + 1$.
% , up to rounding errors,
% Note that, if the pair $(a, \boldsymbol{\psi})$, for some $a$ and $\boldsymbol{\psi}$, is not selected, still we have a term $\frac{1}{\sqrt{1}}$ added to the sum. For those pairs $(a, \boldsymbol{\psi})$ that are selected, the term $\frac{1}{\sqrt{1}}$ is added twice.

For the term $\beta$, we have
\begin{align} \nonumber
\label{eq:beta}
    \sum_{t = 1}^{T} \sum_{\boldsymbol{\psi} \in \boldsymbol{\Psi}}^{} \mathbbm{1}\{\boldsymbol{\psi}_{t} = \boldsymbol{\psi}\} \sum_{i \in \mathscr{D}(\boldsymbol{\psi})} \frac{1}{\sqrt{N_{t}(i; w)}}
    &= \sum_{t = 1}^{T} \sum_{\boldsymbol{\psi} \in \boldsymbol{\Psi}}^{} \mathbbm{1}\{\boldsymbol{\psi}_{t} = \boldsymbol{\psi}\} \sum_{i \in \mathcal{D}} \frac{\mathbbm{1}\{i \in  \mathscr{D}(\boldsymbol{\psi})\}}{\sqrt{N_{t}(i; w)}} \\ \nonumber
    % &= \sum_{t = 1}^{T} \sum_{\boldsymbol{\psi} \in \boldsymbol{\Psi}}^{} \sum_{i \in \mathcal{D}} \frac{\mathbbm{1}\{\boldsymbol{\psi}_{t} = \boldsymbol{\psi} \& i \in \mathscr{D}(\boldsymbol{\psi})\}}{\sqrt{N_{t}(i; w)}} \\ \nonumber
    &\stackrel{(a)}{\leq} \sum_{t = 1}^{T} \sum_{i \in \mathcal{D}} \frac{\mathbbm{1}\{i \in  \hat{\mathcal{I}}_{t}\}}{\sqrt{N_{t}(i; w)}} \\ \nonumber
    &\leq \sum_{\ell=1}^{\left\lceil{\frac{T}{w}}\right\rceil}  \sum_{i \in \mathcal{D}} \sum_{t \in I_{\ell}}^{} \frac{\mathbbm{1}\{i \in  \hat{\mathcal{I}}_{t}\}}{\sqrt{N_{t}(i; \ell)}} \\
    &\stackrel{(b)}{\leq} 2 (\frac{T}{w} + 1) D \sqrt{w+1},
\end{align}
where $(a)$ holds because at each time $t$, regardless of the agent's choice of observation set, at most $D$ features' states can be observed. Moreover, $(b)$ follows by a similar reasoning as the one given for (\ref{eq:alpha}); the only difference here is that, the agent can choose to observe more than one feature's state at each time $t$. This means that, unlike the counts $N_{t}(a, \boldsymbol{\psi}; \ell)$, the counts $N_{t}(i; \ell)$ can be increased by $1$ for more than one feature $i$ at each time $t$. Thus, we consider the worst case where $D$ features' states are observed at each time of play.

Therefore, by using (\ref{eq:alpha}) and (\ref{eq:beta}) in (\ref{eq:second-delta-2}), and combining the results with (\ref{eq:hoeffding-ineq-applied}), with probability at least $1-3\delta$, the following bound holds for $\Delta_{2}$.
\begin{align} \nonumber
\label{eq:third-delta-2}
    \sum_{t = 1}^{T} &\sum_{\boldsymbol{\psi} \in \boldsymbol{\Psi}^{+}(\hat{\mathcal{I}}_{t})} p(\boldsymbol{\psi}) \left[\tilde{g}_{t}(\hat{h}_{t}(\boldsymbol{\psi}), \boldsymbol{\psi}) - g_{t}(\hat{h}_{t}(\boldsymbol{\psi}), \boldsymbol{\psi})\right] \\ \nonumber
    &\leq O {\Bigg(} T  {\Big(} \sqrt{ \frac{A \boldsymbol{\Psi}_{tot} \log{(T A \boldsymbol{\Psi}_{tot} w/\delta)}}{w} } + D \sqrt{ \frac{\log{(T D w/\delta)}}{w} } {\Big)} \\ 
    &\hspace{8mm}+ \sqrt{T \log{(1/\delta)}} {\Big(} \sqrt{  A \boldsymbol{\Psi}_{tot} \log{(T A \boldsymbol{\Psi}_{tot} w/\delta)} }
    + D \sqrt{ \log{(T D w/\delta)} } {\Big)} {\Bigg)}.
\end{align}
We conclude the proof by combining (\ref{eq:second-delta-1}) and (\ref{eq:third-delta-2}).
% that, with probability at least $1-3\delta$, the regret of our proposed policy stated in (\ref{eq:final-regret-bound}) holds.
%
\end{proof}
%-------------------------------------

%-------------------------------------> Proof of Theorem 2
% \ref{app:TheoremTwoProof}
% \subsubsection{Proof of Theorem \ref{thm:regret-nonstationary}}
\subsubsection{Proof of Theorem \ref{thm:regret-nonstationary}} %\ref{thm:regret-nonstationary} 2
\label{app:TheoremTwoProof}
%
%------------------------------------->
\begin{proof}
For any positive $T$, define $\Gamma(w)$ as
\begin{align} \nonumber
\Gamma(w) = {\Big\{}t \in \{1, \dots, T\} ~{\Big|}~ 
\bar{r}_{\tau}(a, \boldsymbol{\psi}) = \bar{r}_{t}(a, \boldsymbol{\psi}) ~\&~ \bar{\boldsymbol{c}}_{\tau}[i] = \bar{\boldsymbol{c}}_{t}[i], &\forall a \in \mathcal{A}, \forall \boldsymbol{\psi} \in \boldsymbol{\Psi}, \\
&\forall i \in \mathcal{D}, \forall \tau \hspace{1mm} \text{s.t.} \hspace{1mm} t - w < \tau \leq t {\Big\}}.
\end{align}
In our problem, there are $\Upsilon_{T} + 1$ stationary periods. We add the first and last round to the change points and denote them by $1 = \tau_{0}, \dots, \tau_{\Upsilon_{T} + 1} = T$. Moreover, consider the events $\mathcal{E}_{1}$, $\mathcal{E}_{3}$, and $\mathcal{E}_{4}$, defined in \ref{eq:event1}, \ref{eq:event3}, and \ref{eq:event4}, respectively. We redefne these events for $t \in \Gamma(w)$ instead of $t \in [T]$ to include the time instances belonging only to $\Gamma(w)$, and denote the resulting events by $\mathcal{E}_{1}(w)$, $\mathcal{E}_{3}(w)$, and $\mathcal{E}_{4}(w)$, respectively. By the same reasoning as in Lemma \ref{lem:events}, it holds that $\mathbb{P}[\bar{\mathcal{E}}_{1}(w) \cup \bar{\mathcal{E}}_{2} \cup \bar{\mathcal{E}}_{3}(w) \cup \bar{\mathcal{E}}_{4}(w)] \leq 3\delta$.

Now, we assume that the events $\mathcal{E}_{1}(w)$, $\mathcal{E}_{2}$, $\mathcal{E}_{3}(w)$, and $\mathcal{E}_{4}(w)$ hold and follow the same reasoning as in the proof of Theorem 1. This results in the following regret bound that holds with probability at least $1-3\delta$. %\ref{thm:regret-stationary}
\begin{align} 
\label{eq:regret-decomposition-final}
    \mathcal{R}_{T}(\Pi) 
    &\leq w \Upsilon_{T} + \sum_{t = 1}^{T} \langle \tilde{\mathbf{P}}_{t}(\hat{\mathcal{I}}_{t}) - \mathbf{P}(\hat{\mathcal{I}}_{t}), \tilde{\mathbf{G}}_{t}(\hat{\mathcal{I}}_{t}) \rangle
    + \sum_{i = 0}^{\Upsilon_{T}} \sum_{\tau_{i} + w}^{\tau_{i+1} - 1} \langle \mathbf{P}, \tilde{\mathbf{G}}_{t} - \mathbf{G}_{t} \rangle.
\end{align}
The last term can be bounded similar to (\ref{eq:third-delta-2}) in the proof of Theorem \ref{thm:regret-stationary}. Therefore, %\ref{thm:regret-stationary} 1
\begin{align} \nonumber
\label{eq:decomposition-nonstationary}
    &\sum_{i = 0}^{\Upsilon_{T}} \sum_{\tau_{i} + w}^{\tau_{i+1} - 1} \langle \mathbf{P}, \tilde{\mathbf{G}}_{t} - \mathbf{G}_{t} \rangle \\ \nonumber
    &\leq \sum_{i = 0}^{\Upsilon_{T}} ~O {\Bigg(} (\tau_{i+1} - \tau_{i}) {\Big(} \sqrt{ \frac{A \boldsymbol{\Psi}_{tot} \log{(T A \boldsymbol{\Psi}_{tot} w/\delta)}}{w} }
    + D \sqrt{ \frac{\log{(T D w/\delta)}}{w} } {\Big)} \\ \nonumber
    &\hspace{8mm}+ \sqrt{(\tau_{i+1} - \tau_{i})  \log{(1/\delta)}} 
    {\Big(} \sqrt{  A \boldsymbol{\Psi}_{tot} \log{(T A \boldsymbol{\Psi}_{tot} w/\delta)} } + D \sqrt{ \log{(T D w/\delta)} } {\Big)} {\Bigg)} \\ \nonumber
    &\leq O {\Bigg(} T {\Big(} \sqrt{ \frac{A \boldsymbol{\Psi}_{tot} \log{(T A \boldsymbol{\Psi}_{tot} w/\delta)}}{w} } + D \sqrt{ \frac{\log{(T D w/\delta)}}{w} } {\Big)} \\ 
    &\hspace{8mm}+ \sqrt{\Upsilon_{T} T  \log{(1/\delta)}} {\Big(} \sqrt{  A \boldsymbol{\Psi}_{tot} \log{(T A \boldsymbol{\Psi}_{tot} w/\delta)} } 
    + D \sqrt{ \log{(T D w/\delta)} } {\Big)} {\Bigg)},
\end{align}
where the last inequality follows from Jensen's inequality and the fact that $\sum_{i = 0}^{\Upsilon_{T}} (\tau_{i+1} - \tau_{i}) = T$.
Thus, summarizing the above results, and by using (\ref{eq:second-delta-1}) to bound the second term in (\ref{eq:regret-decomposition-final}), we conclude the proof.
\end{proof}
%-------------------------------------

%------------------------------------->
% \begin{figure}[t!]
%     \centering
%     \includegraphics[width=0.70\textwidth]{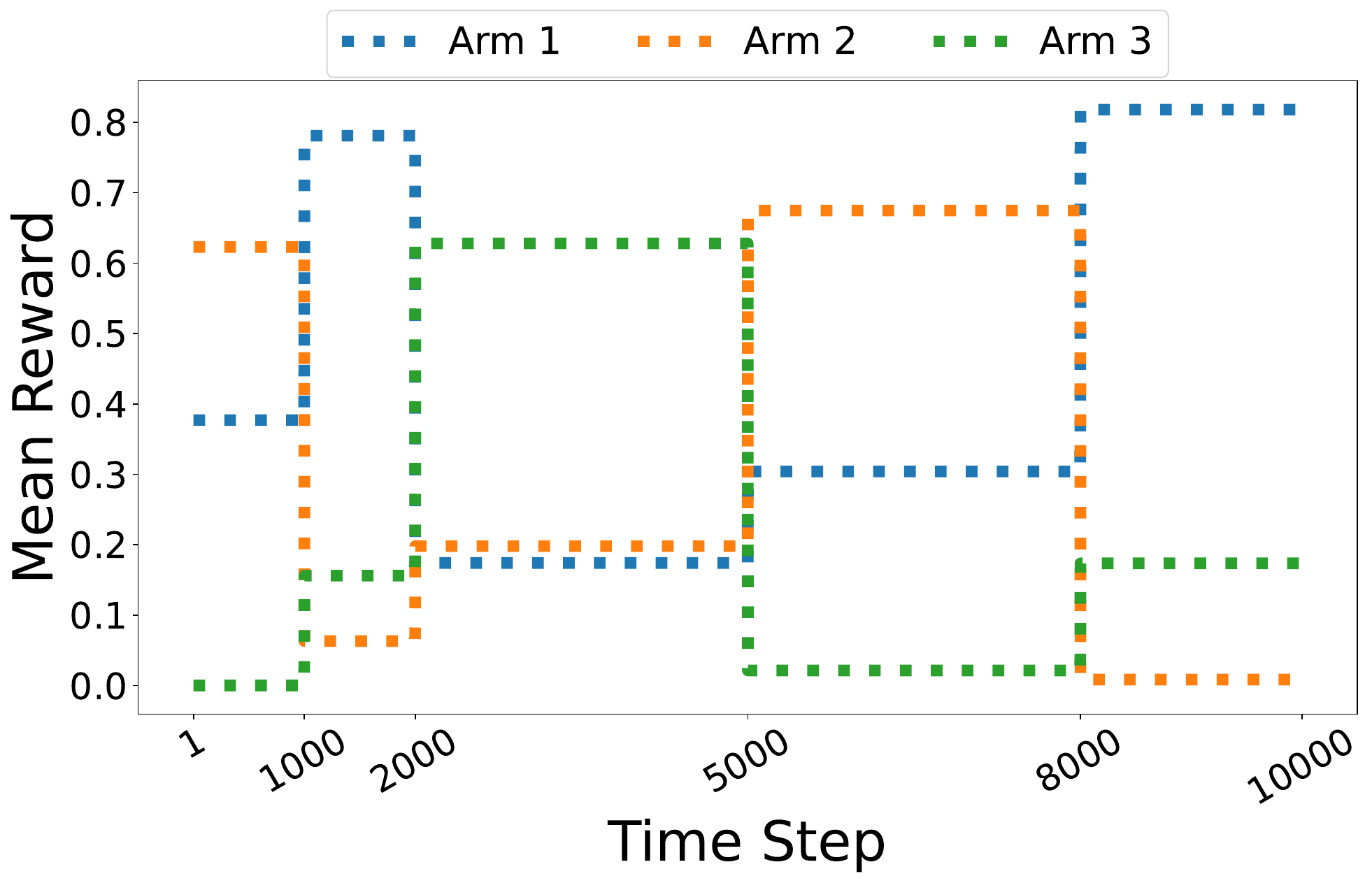}
%     \caption{Evolution of the mean reward for each arm.}
%     \label{fig:MeanRewards}
% \end{figure}
%-------------------------------------
%------------------------------------->
% \begin{figure}[b!]
%     \centering
%     \includegraphics[width=0.70\textwidth]{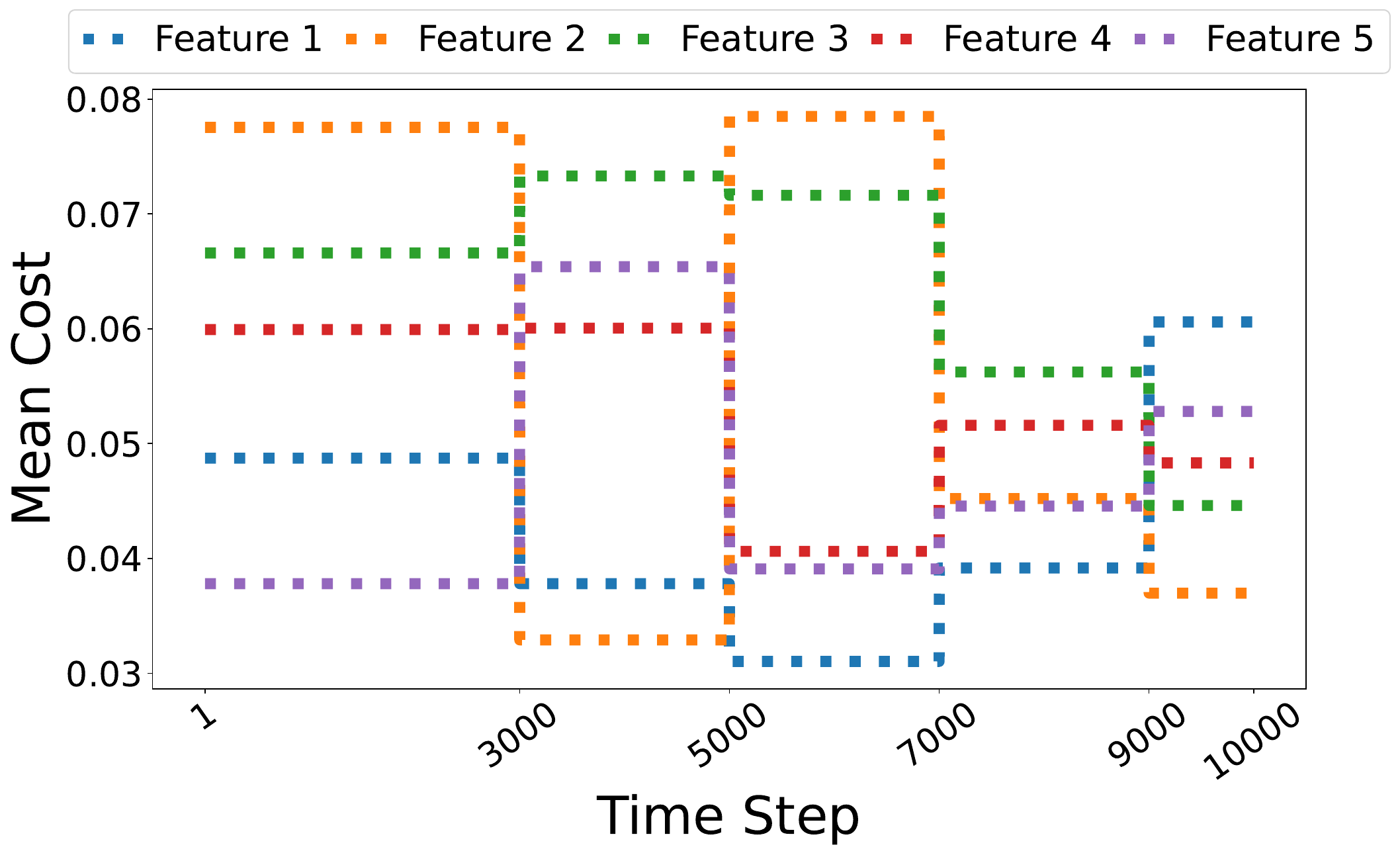}
%     \caption{Evolution of the mean cost for each feature.}
%     \label{fig:MeanCosts}
% \end{figure}
%-------------------------------------
%-------------------------------------> Additional Info for Experiments
% \subsection{Additional Information on Experimental Setup}
\subsection{Additional Information on Experimental Setup}
% DDITIONAL INFORMATION ON EXPERIMENTAL SETUP}
\label{app:AddInfo}
%
% \textbf{Experimental Setup:~}
%------------------------------------->
\begin{figure}[b!]
\begin{center}
    \begin{subfigure}[b]{0.56\textwidth}
    \centering
    \hspace{-5mm}
        \includegraphics[width = 0.98\textwidth]{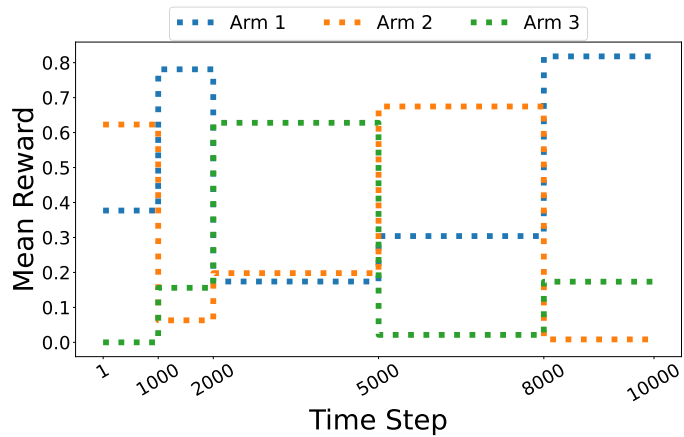}
        \caption{Evolution of the mean reward for each arm.}
        \label{fig:MeanRewards}
    \end{subfigure} \hfill 
    \begin{subfigure}[b]{0.61\textwidth}
    \centering
    \vspace{5mm}
        \includegraphics[width = 0.98\textwidth]{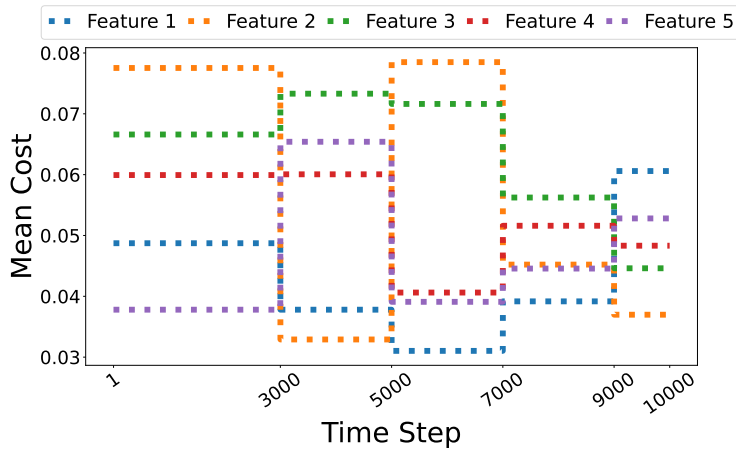}
        % \vspace{1mm}
        \caption{Evolution of the mean cost for each feature.}
        \label{fig:MeanCosts}
    \end{subfigure}
\end{center}
\caption{Settings of mean rewards and mean costs.}
% \caption{(a) Evolution of the mean reward for each arm. (b) Evolution of the mean cost for each feature.}
\label{Fig:AllMeanValues}
\end{figure}
%-------------------------------------

\textbf{Fig. \ref{fig:MeanRewards}} and \textbf{\ref{fig:MeanCosts}} depict the changes in the mean reward for each arm and in the mean cost for each feature, respectively. As we see, the change points in mean rewards and mean costs are not necessarily identical. 

The parameters of benchmark policies in our experiment are listed in \textbf{Table \ref{table:PolicyParams}}. As mentioned before, to tune the parameters of NCC-UCRL2 and PS-LinUCB, we consider $2$ change points in mean rewards at times $\{1000, 2000\}$, but no change points in mean costs. For NCC-UCRL2, We simultaneously tuned $w$ and $\delta$ by performing a grid search over the sets $\{100, 250, 350, 500, 600, \newline 800, 900, 1000, 1250, 1500, 1750\}$ and $\{0.002, 0.006, 0.009, 0.01, 0.02, 0.03, 0.04, 0.05, 0.07, 0.08, \newline 0.1, 0.2, 0.3, 0.5, 0.7, 0.8, 0.9\}$, respectively. To that end, we ran the algorithm with each pair of parameters for three repetitions and chose parameters that resulted in the highest average gain.
%  Moreover, 

%-------------------------------------> Table
\renewcommand{\arraystretch}{1.1}
\renewcommand{\tabcolsep}{1.5mm}
%Single-Column Version
% \begin{table}[b!]
% \caption{Parameters of the different policies in the experiment.}
% \label{table:PolicyParams}
% {%\small
% % \centering
% \begin{center}
% \begin{tabular}{c|c|c|c|c|c|c}
%     \cline{2-7}
%     %\hline
%     \multicolumn{1}{c|}{} &
%     \multicolumn{6}{c}{Policy Setting} \\
%     \hline
%     Policy & Sim-OOS & PS-LinUCB & LinUCB & UCB1 & $\varepsilon$-Greedy & NCC-UCRL2 \\
%     \hline
%     \multirow{3}{*}{Parameters} & $\delta = 0.8$ & $\alpha = 0.7$ & $\alpha = 0.5$ & $\alpha = 0.6$ & $\varepsilon = 0.03$ & $w = 250$\\
%     & & $\omega = 100$  &  & & & $\delta = 0.04$\\
%     & & $\delta = 0.05$  &  & & & \\
%     \hline
%     \end{tabular}
%     \end{center}
%     }     
% \end{table}
%Double-Column Version
\begin{table}[t!]
\caption{Parameters of the different policies in the experiment.}
\label{table:PolicyParams}
{%\footnotezie
% \centering
\begin{center}
\begin{tabular}{l|c c c} %{0.1\textwidth}
    % \cline{2-7}
    \hline
    Policy & \multicolumn{3}{c}{Parameters} \\
    \hline
    Sim-OOS & $\delta = 0.8$ & & \\
    \hline
    PS-LinUCB & $\alpha = 0.7$ & $\omega = 100$ & $\delta = 0.05$ \\
    \hline
    LinUCB & $\alpha = 0.5$ &  & \\
    \hline
    UCB1 & $\alpha = 0.6$ & & \\
    \hline
    $\varepsilon$-Greedy & $\varepsilon = 0.03$ & & \\
    \hline
    NCC-UCRL2 & $w = 250$ & $\delta = 0.04$ & \\
    \hline
    \end{tabular}
    \end{center}
    }     
\end{table}
%-------------------------------------

%-------------------------------------> section Additional Experiments
\subsection{Additional Experiments}
% DDITIONAL EXPERIMENTS}
\label{app:AddExp}
%
%------------------------------------->
\begin{figure}[b!]
    \centering
    \includegraphics[width=0.64\textwidth]{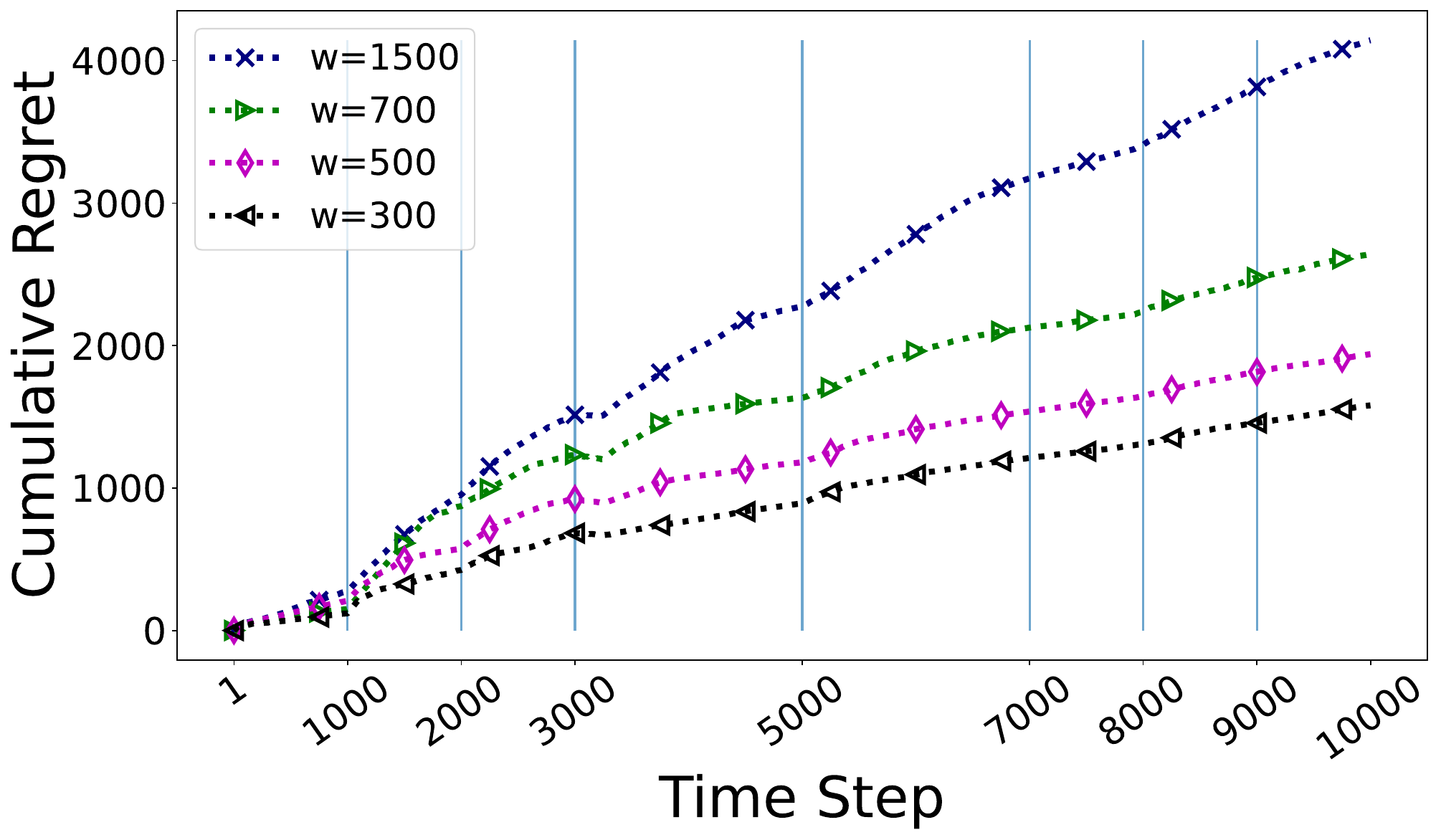}
    \caption{Cumulative regret of NCC-UCRL2 for different window parameters $w$.}
    \label{fig:Effectofw}
\end{figure}
%-------------------------------------
% \subsubsection{Effect of Window Length $w$} 
\textbf{Effect of Window Length $w$:}
Choosing the right window parameter $w$ is crucial to ensure that the NCC-UCRL2 algorithm promptly adjusts the decision-making strategy after sudden changes while maintaining a good performance during stationary periods. The window size $w$ can be chosen based on the change frequency. A smaller $w$ allows for faster adaptation but reduces the performance during stationary periods due to exploiting fewer relevant data samples. In an environment with infrequent change points, a larger $w$ is more suitable as it results in a better performance between change points, although the algorithm requires more storage space. \textbf{Fig. \ref{fig:Effectofw}} illustrates the trend of cumulative regret of our algorithm when running on the nursery dataset with different window parameters $w$. Based on our simulation's setting, we see that NCC-UCRL2 with smaller window sizes (around $300$) results in a much lower regret (e.g., compared to values more than $700$).

% \subsubsection{Accuracy:} 
\textbf{Accuracy:} 
To further analyze the performance of our algorithm, we define \textit{accuracy} for the model based on the number of state observations. With $\ell$ observations, the accuracy yields $\left(\sum_{j=0}^{\ell}{\sum_{t=1}^{T}{r_t \mathbbm{1}\{|\mathcal{I}_t| = j \}}}\right) / 
\left(\sum_{j=0}^{\ell}{\sum_{t=1}^{T}{\mathbbm{1}\{|\mathcal{I}_t| = j \}}}\right)
$. We use the term accuracy since, in our experiment, a reward of $1$ implies the correct classification of a nursery application. \cite{Atan21:DDO} perform a similar analysis for Sim-OOS. Therefore, we plot the accuracy of NCC-UCRL2 and Sim-OOS for a different number of observations in \textbf{Fig. \ref{fig:Accuracy}}, as these are the only algorithms that implement feature selection. For fewer observations, the accuracy of Sim-OOS is close to that of NCC-UCRL2, while NCC-UCRL2 achieves a higher accuracy as the number of observations increases. This again shows the importance of learning the optimal observations and demonstrates the superiority of our method.
%------------------------------------->
\begin{figure}[t!]
    \centering
    \includegraphics[width=0.7\textwidth]{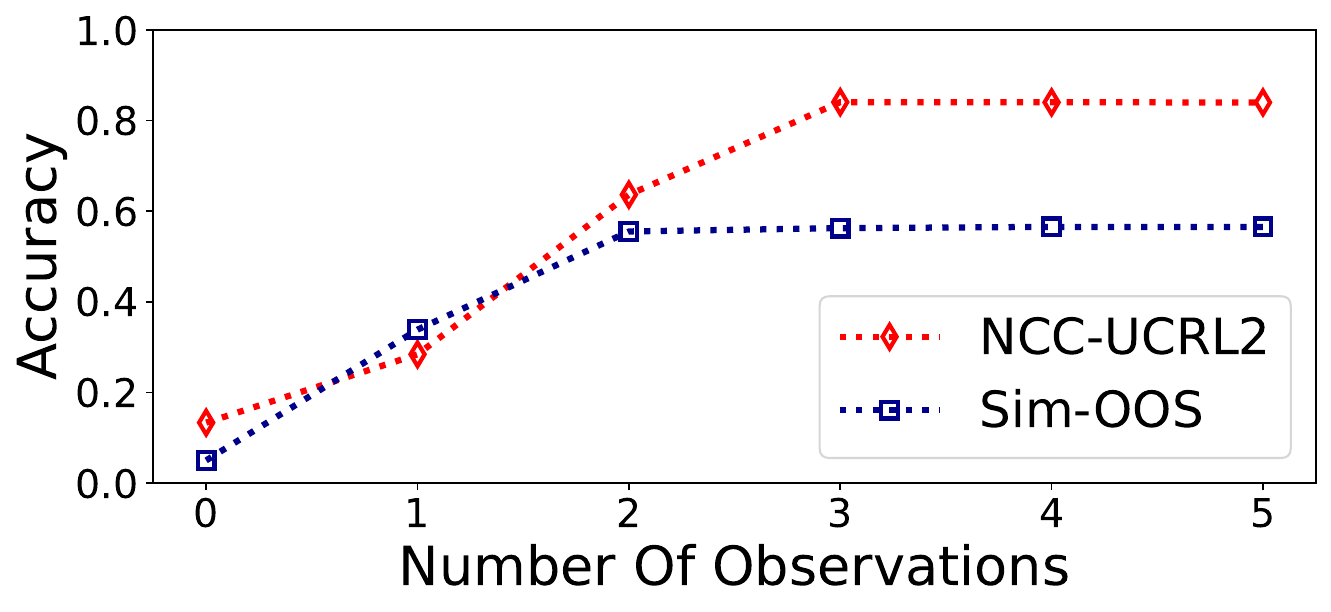}
    \caption{Accuracy for different number of observations.}
    \label{fig:Accuracy}
\end{figure}
%-------------------------------------
%
%-------------------------------------> References
% BibTeX users should specify bibliography style 'splncs04'.
% References will then be sorted and formatted in the correct style.
%
\bibliographystyle{IEEEbib}
\bibliography{references}
%-------------------------------------
% \newpage
\end{document}